\newcommand\ie{i.\,e.\xspace}
\newcommand\eg{e.\,g.\xspace}
\newcommand{\ind}{\perp\!\!\!\perp} 
\newcommand{\notind}{\not\!\perp\!\!\!\perp}
\newcommand*\diff{\mathop{}\!\mathrm{d}}
\newcolumntype{P}[1]{>{\centering\arraybackslash}p{#1}}
\newtheorem{lemma}{Lemma}
\newtheorem{definition}{Definition}
\newtheorem{cor}{Corollary}
\title{Bounds on Representation-Induced Confounding Bias for Treatment Effect Estimation}
\author{Valentyn Melnychuk, Dennis Frauen \& Stefan Feuerriegel \\
LMU Munich \& Munich Center for Machine Learning \\
Munich, Germany\\
\texttt{melnychuk@lmu.de} \\
}
\newcommand{\RICBfull}{\emph{representation-induced confounding bias}\xspace}
\newcommand{\RICB}{RICB\xspace}
\begin{document}

\maketitle

\begin{abstract}
State-of-the-art methods for conditional average treatment effect (CATE) estimation make widespread use of representation learning. Here, the idea is to reduce the variance of the low-sample CATE estimation by a (potentially constrained) low-dimensional representation. However, low-dimensional representations can lose information about the observed confounders and thus lead to bias, because of which the validity of representation learning for CATE estimation is typically violated. In this paper, we propose a new, representation-agnostic refutation framework for estimating bounds on the \emph{representation-induced confounding bias} that comes from dimensionality reduction (or other constraints on the representations) in CATE estimation. First, we establish theoretically under which conditions CATE is non-identifiable given low-dimensional (constrained) representations. Second, as our remedy, we propose a neural refutation framework which performs partial identification of CATE or, equivalently, aims at estimating lower and upper bounds of the representation-induced confounding bias. We demonstrate the effectiveness of our bounds in a series of experiments. In sum, our refutation framework is of direct relevance in practice where the validity of CATE estimation is of importance.
\end{abstract}

\vspace{-0.2cm}
\section{Introduction}
\vspace{-0.2cm}

Estimating conditional average treatment effect (CATE) from observational data is important for many applications in medicine \citep{kraus2023data,feuerriegel2024causal}, marketing \citep{varian2016causal}, and economics \citep{imbens1994identification}. For example, medical professionals use electronic health records to personalize care based on the estimated CATE.


Different machine learning methods have been developed for CATE estimation (see Sec.~\ref{sec:RW} for an overview). In this paper, we focus on representation learning methods \citep[e.g.,][]{johansson2016learning,shalit2017estimating,hassanpour2019learning,hassanpour2019counterfactual,zhang2020learning,assaad2021counterfactual,johansson2022generalization}. Representation learning methods have several benefits: (1)~They often achieve state-of-the-art performance, especially in low-sample regimens. (2)~They provide generalization bounds for best-in-class estimation, regardless of whether ground-truth CATE belongs to a specified model class. (3)~They manage to reduce the variance of the low-sample CATE estimation by using a (potentially constrained) low-dimensional representation. Often, constraints are imposed on the representations such as balancing with an empirical probability metric and invertibility.

While representation learning methods benefit from reducing variance, they also have a shortcoming: low-dimensional (potentially constrained) representations can lose information about covariates, including information about ground-truth confounders. As we show later, such low-dimensional representations can thus lead to bias, because of which the validity of representation learning methods may be violated. To this end, we introduce the notion of \RICBfull (\RICB). As a result of the \RICB, the validity of representation learning for CATE estimation is typically violated, and we thus offer remedies in our paper. 

In this paper, we propose a new, representation-agnostic refutation framework for estimating bounds on the \RICB that comes from dimensionality reduction (or other constraints on the representation). First, we show in which settings CATE is non-identifiable due to the \RICB. We further discuss how different constraints imposed on the representation impact the \RICB. Then, as a remedy, we perform partial identification of CATE or, equivalently, estimate lower and upper bounds on the \RICB.

We empirically demonstrate the effectiveness of our refutation framework on top of many state-of-the-art representation learning methods for CATE estimation. We thereby show that the established representation learning methods for CATE estimation such as BNN \citep{johansson2016learning}; TARNet, CFR, RCFR \citep{shalit2017estimating,johansson2018learning,johansson2022generalization}; CFR-ISW \citep{hassanpour2019counterfactual}; and BWCFR \citep{assaad2021counterfactual} can be more reliable for decision-making when combined with our refutation framework. We thus evaluate decision-making based on CATE, finding that the policies with deferral that account for our bounds on the \RICB achieve lower error rates than the decisions based on CATE estimates from the original representation learning method. As such, our refutation framework offers a tool for practitioners to check the validity of CATE estimates from representation learning methods and further improve the reliability and safety of representation learning in CATE estimation.

In sum, our \textbf{contributions} are following:\footnote{Code is available at \url{https://github.com/Valentyn1997/RICB}.} (1)~We show that the CATE from representation learning methods can be non-identifiable due to a \RICBfull. To the best of our knowledge, we are the first to formalize such bias. (2)~We propose a representation-agnostic refutation framework to perform partial identification of CATE, so that we estimate lower and upper bounds of the representation-induced confounding bias. (3)~We demonstrate the effectiveness of our bounds together with a wide range of state-of-the-art CATE methods.

\begin{table*}[tbp]
    \vspace{-0.7cm}
    \caption{Overview of key representation learning methods for CATE estimation.}
    \label{tab:methods-comparison}
    \begin{center}
        \vspace{-0.1cm}
        \scalebox{.7}{
            \scriptsize
            \begin{tabular}{p{10.3cm}p{2cm}p{2.8cm}p{2.8cm}}
                \toprule
                \multirow{2}{*}{Method} & \multirow{2}{*}{Invertibility} & \multicolumn{2}{c}{Balancing with}  \\
                \cmidrule(lr){3-4}
                       & & empirical probability metrics & loss re-weighting \\
                \midrule
                TARNet \citep{shalit2017estimating, johansson2022generalization} & -- & -- & --\\
                \midrule
                BNN \citep{johansson2016learning}; CFR \citep{shalit2017estimating, johansson2022generalization}; ESCFR \citep{wang2024optimal} & -- & IPM (MMD, WM) & --  \\
                \midrule
                RCFR \citep{johansson2018learning,johansson2022generalization} & -- & IPM (MMD, WM) & Learnable weights \\
                \midrule
                DACPOL \citep{atan2018counterfactual}; CRN \citep{bica2020estimating}; ABCEI \citep{du2021adversarial}; CT \citep{melnychuk2022causal}; MitNet \citep{guo2023estimating};  BNCDE \citep{hess2024bayesian} & \multirow{2}{*}{--} & \multirow{2}{*}{JSD (adversarial learning)} & \multirow{2}{*}{--} \\
                \midrule
                SITE \citep{yao2018representation} & Local similarity & Middle point distance & -- \\
                \midrule
                CFR-ISW \citep{hassanpour2019counterfactual}; DR-CFR \citep{hassanpour2019learning}; DeR-CFR \citep{wu2022learning} & \multirow{2}{*}{--} & \multirow{2}{*}{IPM (MMD, WM)} & \multirow{2}{*}{Representation propensity} \\
                \midrule
                DKLITE \citep{zhang2020learning} & Reconstruction loss & Counterfactual variance & -- \\
                \midrule
                BWCFR \citep{assaad2021counterfactual} & -- & IPM (MMD, WM) & Covariate propensity \\
                \midrule
                PM \citep{schwab2018perfect}; StableCFR \citep{wu2023stable} & -- & -- & Upsampling via matching \\
                \bottomrule
                \multicolumn{4}{p{14cm}}{IPM: integral probability metric; MMD: maximum mean discrepancy; WM: Wasserstein metric; JSD: Jensen-Shannon divergence}
            \end{tabular}
        }
    \end{center}
    \vspace{-0.6cm}
\end{table*}

\vspace{-0.2cm}
\section{Related Work} \label{sec:RW}
\vspace{-0.2cm}

An extensive body of literature has focused on CATE estimation. We provide an extended overview in Appendix~\ref{app:extended-rw}, while, in the following, we focus on two important streams relevant to our framework.

\textbf{Representation learning for CATE estimation.} Seminal works of \citet{johansson2016learning,shalit2017estimating}; and \citet{johansson2022generalization} provided generalization bounds for representation learning methods aimed at CATE estimation under the assumption of \emph{invertibility} of the representations. Therein, the authors demonstrated, that generalization bounds depend on the imbalance of the invertible representations between treated and untreated subgroups, and some form of \emph{balancing} is beneficial for reducing the variance. However, although invertibility of representations is still required theoretically, it is usually not enforced in practice (see Table~\ref{tab:methods-comparison}). Namely, due to bias-variance trade-off, low-dimensional (non-invertible) representations can still generalize better, although potentially containing confounding bias \citep{ding2017instrumental}.

Numerous works offer a variety of tailored deep representation learning methods for CATE estimation by extending TARNet \citep{shalit2017estimating, johansson2022generalization} (the simplest representation leaning CATE estimator) with different ways to enforce (i)~invertibility and (ii)~balancing. For example, (i)~invertibility was enforced via local similarity in SITE \citep{yao2018representation}, and via reconstruction loss in DKLITE \citep{zhang2020learning}. (ii)~Balancing was implemented as both (1)~a constraint on the representation with some empirical probability metric and (2)~loss re-weighting. Examples of (1) include balancing with \emph{integral probability metrics} in, \eg, BNN \citep{johansson2016learning}, and CFR \citep{shalit2017estimating, johansson2022generalization}; \emph{Jensen-Shannon divergence} in, \eg, DACPOL \citep{atan2018counterfactual}, CT \citep{melnychuk2022causal}, and MitNet \citep{guo2023estimating}; \emph{middle point distance} in SITE \citep{yao2018representation}; and \emph{counterfactual variance} in DKLITE \citep{zhang2020learning}. (2)~Loss re-weighting was implemented with \emph{learnable weights} in RCFR \citep{johansson2018learning,johansson2022generalization}; \emph{inverse propensity weights} in, \eg, CFR-ISW \citep{hassanpour2019counterfactual} and BWCFR \citep{assaad2021counterfactual}; and \emph{upsampling} in PM \citep{schwab2018perfect} and StableCFR \citep{wu2023stable}. We provide the full overview of key representation learning methods for CATE in Table~\ref{tab:methods-comparison}.

\textbf{Handling unobserved confounding.} Sensitivity models are used to estimate the bias in treatment effect estimation due to the \emph{hidden confounding} by limiting the strength of hidden confounding through a sensitivity parameter. There are two main classes of sensitivity models: outcome sensitivity models (OSMs) \citep{robins2000sensitivity,blackwell2014selection,bonvini2022sensitivity} and propensity (marginal) sensitivity models (MSMs) \citep{tan2006distributional,jesson2021quantifying,bonvini2022sensitivity,dorn2022sharp,frauen2023sharp,oprescu2023b}. We later adopt ideas from the MSMs to derive our refutation framework, because they only require knowledge of propensity scores wrt. covariates and representation, but \emph{not} the actual expected potential outcomes as required by OSMs.   

{Of note, our method uses MSMs but in an unconventional way. Importantly, we do not use MSMs for sensitive analysis but for partial identification. As such, we do \emph{not} face the common limitation of MSMs in that the sensitivity parameter (which guides the amount of hidden confounding) must be chosen through expert knowledge. In contrast, our application allows us to estimate the sensitivity parameters in the MSM from data.}

\textbf{Research gap:} To the best of our knowledge, no work has studied the confounding bias in low-dimensional (constrained) representations for CATE estimation. Our novelty is to formalize the \RICBfull and to propose a neural refutation framework for estimating bounds.

\begin{figure}
    \vspace{-0.7cm}
    \centering
    \includegraphics[width=0.97\textwidth]{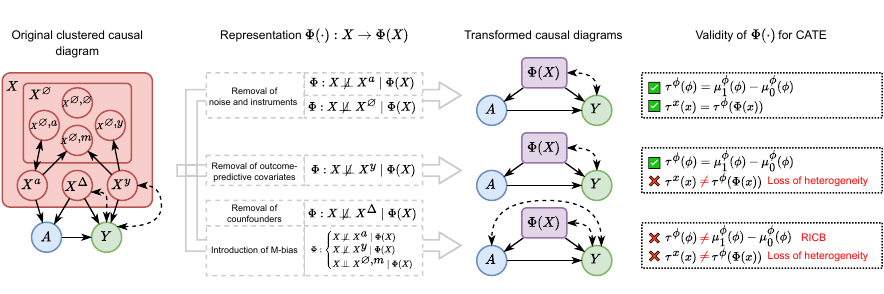}
    \vspace{-0.6cm}
    \caption{The validity of CATE estimation is influenced by the different constraints imposed on representations $\Phi(\cdot)$. In \textcolor{red}{red}: different violations of valid CATE estimation.}
    \label{fig:diag-valid}
    \vspace{-0.45cm}
\end{figure}

\vspace{-0.2cm}
\section{Validity of Representation Learning for CATE}
\vspace{-0.2cm}

In the following, we first formalize representation learning for CATE estimation (Sec.~\ref{sec:representation_learning_CATE}). We then define the concept of valid representations and give two conditions when this is violated (Sec.~\ref{sec:valid_representations}). Finally, we lay out the implications of invalid representations (Sec.~\ref{sec:invalid_representations}).

\textbf{Notation.} Let $X$ be a random variable with a realization $x$, distribution $\mathbb{P}(X)$, density/probability function $\mathbb{P}(X = x)$ and domain $\mathcal{X}$, respectively. Furthermore, let $\mathbb{P}(Y \mid X = x, A = a)$ be the conditional distribution of the outcome $Y$. Let $\pi_a^x(x) = \mathbb{P}(A = a \mid X = x)$ denote the covariate propensity score for treatment $A$ and covariates $X$, and {$\mu_a^x(x) = \mathbb{E}(Y \mid A = a, X = x)$} an expected covariate-conditional outcome. Analogously, $\pi_a^\phi(\phi) = \mathbb{P}(A = a \mid \Phi(X) = \phi)$ and {$\mu_a^\phi(\phi) = \mathbb{E}(Y \mid A = a, \Phi(X) = \phi)$} are the representation propensity score and the expected representation-conditional outcome, respectively, for treatment $A$ and representation $\Phi(X)$. {Importantly, in the definitions of $\pi_a^x$, $\mu_a^x$, $\pi_a^\phi$, and $\mu_a^\phi$, upper indices serve as indicators that the nuisance functions relate either to the covariates or to the representations and, therefore, are not arguments.} Let $Y[a]$ denote a potential outcome after intervening on the treatment $do(A = a)$. $\operatorname{dist}[\cdot]$ is a distributional distance.

\vspace{-0.2cm}
\subsection{Representation Learning for CATE}
\label{sec:representation_learning_CATE}
\vspace{-0.1cm}

\textbf{Problem setup.} Assume we have observational data $\mathcal{D}$ with a binary treatment $A \in \{0, 1\}$, high-dimensional covariates $X \in \mathcal{X} \subseteq \mathbb{R}^{d_x}$, and a continuous outcome  $Y \in \mathcal{Y} \subseteq \mathbb{R}$. In medicine, $A$ could be ventilation, $X$ the patient's health history, and $Y$ respiratory rate. 

Without a loss of generality, we assume that covariates $X$ are an implicit cluster \citep{anand2023causal} of four sub-covariates, $X = \{X^\varnothing, X^a, X^y, X^\Delta\}$, namely, (1)~noise, (2)~instruments, (3)~outcome-predictive covariates, and (4)~confounders \citep{cinelli2022crash} (see clustered causal diagram in Fig.~\ref{fig:diag-valid}). The noise, in turn, can be partitioned onto (1.1)~an independent noise, (1.2)~descendants of instruments, (1.3)~descendants of outcome-predictive covariates, and (1.4)~M-bias-inducing covariates, namely, $X^\varnothing = \{X^{\varnothing,\varnothing}, X^{\varnothing,a}, X^{\varnothing,y}, X^{\varnothing,m}\}$.  Importantly, some sub-covariates could be empty, and {the partitioning of $X$ is usually unknown in practice, and, thus, it will be only used in this paper to provide better intuition (all experiments later consider the partitioning as unknown)}. The observational data are sampled i.i.d. from a joint distribution, \ie, $\mathcal{D} = \{x_i, a_i, y_i\}_{i=1}^n \sim \mathbb{P}(X, A, Y)$, where $n$ is the sample size. Furthermore, potential outcomes are only observed for factual treatments, \ie, $Y = A \, Y[1] + (1 - A) \, Y[0]$, which is referred as the fundamental problem of causal inference.

The \emph{conditional average treatment effect} (CATE) is then defined as
\begin{equation} \label{eq:cov-cate}
\small
    \tau^x(x) = \mathbb{E}(Y[1] - Y[0] \mid X = x),
\end{equation}
where upper index $\tau$ indicates that CATE is with respect to the covariates $x$. 

Identification of CATE from observational data relies on Neyman–Rubin potential outcomes framework \citep{rubin1974estimating}. As such, we assume (i)~\emph{consistency:} if $A = a$ then $Y = Y[a]$; (ii)~\emph{overlap:} $\mathbb{P}(0 < \pi_a^x(X) < 1) = 1$; and (iii)~\emph{exchangeability:} $A \ind (Y[0], Y[1]) \mid X$. Under assumptions (i)--(iii), the CATE is \textbf{identifiable} from observational data, $\mathbb{P}(X, A, Y)$, \ie from expected covariate-conditional outcomes,  $\tau^x(x) = \mu_1^x(x) - \mu_0^x(x)$.

\textbf{Representation learning for CATE estimation.} Representations learning methods CATE estimators \citep{johansson2016learning,shalit2017estimating,johansson2022generalization} {do not assume a specific partitioning of covariates $X$} and generally consist of two main components: (1)~the \emph{representation subnetwork} $\Phi(\cdot)$ and (2)~the \emph{potential outcomes predicting subnetwork(s)}, \ie, $f_0(\cdot), f_1(\cdot)$.\footnote{The latter can have either one subnetwork with two outputs (SNet) as in, \eg, BNN \citep{johansson2016learning}; or two subnetworks (TNet) as in, \eg, TARNet \citep{shalit2017estimating}.} Both components are as follows. (1)~The representation subnetwork maps {all the covariates} $X \to \Phi(X)$ to a low- or equal-dimensional representation space with some measurable function, $\Phi(\cdot): \mathbb{R}^{d_x} \to \mathbb{R}^{d_\phi}, d_\phi \le d_x$.\footnote{The transformation $\Phi(\cdot)$ can also be seen as a mapping between micro-level covariates $X$ to low-dimensional macro-level representations \citep{rubenstein2017causal}.} Additional constraints can be imposed on $\Phi(\cdot)$. For example, balancing with an empirical probability metric, $\operatorname{dist}\left[\mathbb{P}(\Phi(X) \mid A = 0), \mathbb{P}(\Phi(X) \mid A = 1) \right] \approx 0$, and invertibility, $\Phi^{-1}(\Phi(X)) \approx X$. (2)~The potential outcomes predicting subnetwork(s) then aim at estimating CATE with respect to the representations, namely
\begin{equation} \label{eq:repr-cate}
\small
    \tau^\phi(\phi) = \mathbb{E}(Y[1] - Y[0] \mid \Phi(X) = \phi),
\end{equation}
where $\phi = \Phi(x)$. Yet, $f_0$ and $f_1$ can only access representation-conditional outcomes, and, therefore, instead, estimate $\mu_0^\phi(\phi)$ and $\mu_1^\phi(\phi)$, respectively.

\vspace{-0.1cm}
\subsection{Valid representations}
\label{sec:valid_representations}
\vspace{-0.2cm}

In the following, we discuss under what conditions representations $\Phi(\cdot)$ are valid for CATE estimation, namely, do not introduce an infinite-data bias. 
\begin{definition}[Valid representations] We call a representation $\Phi(\cdot)$ \emph{valid for CATE} if it satisfies the following two equalities:
\begin{equation} \label{eq:validity}
\small
    \tau^x(x) \stackrel{{(i)}}{=} \tau^\phi(\Phi(x)) 
    \quad\text{and}\quad
    \tau^\phi(\phi) \stackrel{{(ii)}}{=} \mu_1^\phi(\phi) - \mu_0^\phi(\phi),
\end{equation}
where $\tau^x(\cdot)$ and $\tau^\phi(\cdot)$ are CATEs wrt. covariates and representations from Eq.~\eqref{eq:cov-cate} and \eqref{eq:repr-cate}, respectively.
\end{definition}
If equality~$(i)$ is violated, then we say that the representation suffers from a \emph{loss in heterogeneity}. If $(ii)$ is violated, it suffers from a {\RICBfull}. Hence, if any of the two is violated, we have invalid representations (see Fig.~\ref{fig:diag-valid}, left).

\textbf{Characterization of valid representations.} We now give two examples of valid representations based on whether information about some sub-covariates $\{X^\varnothing, X^a, X^y, X^\Delta \}$ is fully preserved in $\Phi(X)$.

{\tiny$\blacksquare$}~\emph{Invertible representations.} A trivial example of a valid representation is an invertible representation \citep{shalit2017estimating,zhang2020learning,johansson2022generalization}:
\begin{align*}
\small
    \begin{split}
    \tau^x(x) &\stackrel{(i)}{=} \mathbb{E}(Y[1] - Y[0] \mid X = \Phi^{-1}(\Phi(x))) =  \mathbb{E}(Y[1] - Y[0] \mid \Phi(X) = \Phi(x)) = \tau^\phi(\Phi(x)) ,\\
    \tau^x(x) &\stackrel{(ii)}{=} \mathbb{E}\left(Y \mid A = 1, X = \Phi^{-1}(\Phi(x))\right) - \mathbb{E}\left(Y \mid A = 0, X = \Phi^{-1}(\Phi(x))\right) = \mu_1^\phi(\Phi(x)) - \mu_0^\phi(\Phi(x)),
    \end{split}
\end{align*}
where equality $(ii)$ follows from setting $\Phi(x)$ to $\phi$. Notably, if $\Phi(\cdot)$ is an invertible transformation, then
\begin{equation} \label{eq:inv-ind}
\small
    X \ind X^\varnothing \mid \Phi(X), \quad X \ind X^a \mid \Phi(X), \quad X \ind X^y \mid \Phi(X), \quad X \ind X^\Delta \mid \Phi(X),
\end{equation}
\ie, there is no loss of information on each of the sub-covariates. {Eq.~\eqref{eq:inv-ind} holds, as conditioning on $\Phi(X)$ renders each sub-covariate deterministic, thus implying independence.} In Lemma~\ref{lemmma:inv} of Appendix~\ref{app:proofs}, we also show that the opposite statement is true: when all the statements hold in Eq.~\eqref{eq:inv-ind}, then $\Phi(\cdot)$ is an invertible transformation.

{\tiny$\blacksquare$}~\emph{Removal of noise and instruments.} Another class of valid representations are those, which lose any amount of information about the noise, $X^\varnothing$, or the instruments, $X^a$:
\begin{equation}
\small
    X \notind X^\varnothing \mid \Phi(X) \quad \text{or} \quad X \notind X^a \mid \Phi(X).
\end{equation}
The validity follows from the d-separation in the clustered causal diagram (Fig.~\ref{fig:diag-valid}) and invertibility of $\Phi(\cdot)$ wrt. $X^\Delta$ and $X^y$ ({see Lemma~\ref{lemma:valid-repr} in Appendix~\ref{app:proofs}}):
\begin{align}
\small
\hspace{-0.6cm}
    \begin{split}
    \tau^x(x) &\stackrel{(i)}{=} \mathbb{E}(Y[1] - Y[0] \mid x) = \mathbb{E}(Y[1] - Y[0] \mid x^\Delta, x^y)  
    = \mathbb{E}(Y[1] - Y[0] \mid \Phi(x)) = \tau^\phi(\Phi(x)), \\
    \tau^x(x) &\stackrel{(ii)}{=} \mathbb{E}\left(Y \mid A = 1, x^\Delta, x^y \right) - \mathbb{E}\left(Y \mid A = 0, x^\Delta, x^y\right) \\ 
    \small
    &= \mathbb{E}\left(Y \mid A = 1, \Phi(x) \right) - \mathbb{E}\left(Y \mid A = 0, \Phi(x)\right) = \mu_1^\phi(\Phi(x)) - \mu_0^\phi(\Phi(x)),
    \end{split}
\hspace{-1cm}
\end{align}
where equality $(ii)$ follows from setting $\Phi(x)$ to $\phi$. 

In actual implementations, representation learning methods for CATE achieve (1)~the loss of information about the instruments through balancing with an empirical probability metric, and (2)~the loss of information about the noise by lowering the representation size (which is enforced with the factual outcome loss). 

\vspace{-0.2cm}
\subsection{Implications of Invalid representations}
\label{sec:invalid_representations}
\vspace{-0.1cm}

In the following, we discuss the implications for CATE estimation from (1)~loss of heterogeneity and (2)~\RICB for CATE estimation, and in what scenarios they appear in low-dimensional or balanced representations. 

\textbf{$(i)$~Loss of heterogeneity.} The loss of heterogeneity happens whenever $\tau^x(x) \neq \tau^\phi(\Phi(x))$. It means that the treatment effect at the covariate (individual) level is different from the treatment effect at the representation (aggregated) level. As an example, such discrepancy can occur due to aggregation such as a one-dimensional representation (e.g., as in the covariate propensity score, $\Phi(X) = \pi_1^x(X)$). In this case, the treatment effect $\tau^\phi(\phi)$ denotes a propensity conditional average treatment effect, which is used in propensity matching. 

The loss of heterogeneity happens whenever some information about $X^\Delta$ or $X^y$ is lost in the representation, \ie,
\begin{equation}
\small
    X \notind X^\Delta \mid \Phi(X) \quad \text{or} \quad X \notind X^y \mid \Phi(X),
\end{equation}
and the representation itself is not perfectly predictive of the expected covariate-conditional outcomes, e.g., $\Phi(x) \neq \{\mu_0^x(x), \mu_1^x(x) \}$.
This could be the case due to a too low dimensionality of the representation so that the information on $X^y$ or $X^\Delta$ is lost, or due to too large balancing with an empirical probability metric so that we lose $X^\Delta$. Then, the equality $(i)$ does not any longer hold in Eq.~\eqref{eq:validity}, \ie,
\begin{align*} \label{eq:loss-hetero}
\small
    \begin{split}
    &\tau^\phi(\Phi(x)) \stackrel{(i)}{=}  \mathbb{E}(Y[1] - Y[0] \mid \Phi(x)) 
     =  \int_{\mathcal{X}_\Delta \times \mathcal{X}_Y} \hspace{-0.5cm} \mathbb{E}(Y[1] - Y[0] \mid \Phi(x), x^\Delta, x^y) \, \mathbb{P}(X^\Delta = x^\Delta, X^y = x^y \mid \Phi(x)) \diff{x^\Delta} \diff{x^y} \\ 
     =& \int_{\mathcal{X}_\Delta \times \mathcal{X}_Y} \hspace{-0.5cm} \tau^x(x) \, \mathbb{P}(X^\Delta = x^\Delta, X^y = x^y \mid \Phi(x)) \diff{x^\Delta} \diff{x^y} \, \textcolor{red}{\neq} \, \tau^x(x).
    \end{split}
\end{align*}

Although the CATE wrt. representations, $\tau^\phi(\phi)$, and the CATE wrt. covariates, $\tau^x(x)$, are different, the CATE wrt. representations is identifiable from observational data $\mathbb{P}(\Phi(X), A, Y)$ due to the exchangeability wrt. representations, $A \ind (Y[0], Y[1]) \mid \Phi(X)$. This is seen in the following:
\begin{align}
\small
    \begin{split}
        \tau^\phi(\phi) &\stackrel{(ii)}{=} \mathbb{E}(Y[1] - Y[0] \mid \phi) 
         = \mathbb{E}(Y[1] \mid A=1, \phi) - \mathbb{E}(Y[0] \mid A=0, \phi) = \mu_1^\phi(\phi) - \mu_0^\phi(\phi). 
    \end{split}
\end{align}

\textbf{$(ii)$~Representation-induced confounding bias (\RICB).} This situation happens when information about $X^\Delta$ is lost or when M-bias is introduced, \ie, some information is lost about both $X^a$ and $X^y$ but not $X^{\varnothing,m}$. Yet, M-bias is rather a theoretic concept and rarely appears in real-world studies \citep{ding2015adjust}; therefore, we further concentrate on the loss of confounder information in representations. As described previously, the information on $X^\Delta$ could be lost due to an incorrectly chosen dimensionality of the representation or due to too large balancing with an empirical probability metric. In this case (and when the representation is not perfectly predictive of $\mu_a^x$), in addition to the loss of heterogeneity, we have \RICBfull (\RICB). That is, the CATE wrt. representations is \emph{non-identifiable} from observational data, $\mathbb{P}(\Phi(X), A, Y)$. This follows from
\begin{align}
\small
    \begin{split}
        \tau^\phi(\phi) &\stackrel{(ii)}{=} \mathbb{E}(Y[1] - Y[0] \mid \phi) 
         \, \textcolor{red}{\neq} \, \mathbb{E}(Y[1] \mid A=1, \phi) - \mathbb{E}(Y[0] \mid A=0, \phi) = \mu_1^\phi(\phi) - \mu_0^\phi(\phi). 
    \end{split}
\end{align}
Technically, both the CATE wrt. representations, $\tau^\phi(\phi)$, and the \RICB, $\tau^\phi(\phi) - (\mu_1^\phi(\phi) - \mu_0^\phi(\phi))$, are still identifiable from $\mathbb{P}(X, A, Y)$. Yet, the identification formula has intractable integration in Eq.~\eqref{eq:loss-hetero} and the original CATE wrt. covariates, which makes the whole inference senseless.

Motivated by this, we shift our focus in the following section to \emph{partial identification} of the CATE wrt. to representations (or, equivalently, the \RICB), as partial identification turns out to be tractable. As a result, we can provide bounds for both quantities. With a slight abuse of formulations, we use `the bounds on the \RICB` and `the bounds on the representation CATE` interchangeably, as one can be inferred from the other. 

\textbf{Takeaways.} (1)~The minimal sufficient and valid representation would aim to remove only the information about noise and instruments \citep{ding2017instrumental,johansson2022generalization}. In low-sample settings, we cannot guarantee that any information is preserved in a low-dimensional representation, so we apriori assume that some information is lost about the sub-covariates. (2)~The loss of heterogeneity does not introduce bias but can only make CATE less individualized, namely, suitable only for subgroups. For many applications, like medicine and policy making, subgroup-level CATE is sufficient. (3)~The \RICB automatically implies a loss of heterogeneity. Therefore, we consider the \RICB to be the main problem in representation learning methods for CATE, and, in this paper, we thus aim at providing bounds on the \RICB, as the exact value is intractable.    

\vspace{-0.2cm}
\section{Partial Identification of CATE under \RICB}
\vspace{-0.2cm}

In the following, we use the marginal sensitivity model to derive bounds on the \RICB (Sec.~\ref{sec:bounds_RICB}). Then, we present a neural refutation framework for estimating the bounds, which can be used with any representation learning method for CATE (Sec.~\ref{sec:estimation_framework}). {Here, we do not assume the specific partitioning of $X$ (which we did before in Sec.~\ref{sec:representation_learning_CATE} only for the purpose of providing a better intuition).}

\vspace{-0.2cm}
\subsection{Bounds on the \RICB}
\label{sec:bounds_RICB}
\vspace{-0.1cm}
We now aim to derive lower and upper bounds on the \RICB given by $\underline{\tau^\phi}(\phi)$ and $
\overline{\tau^\phi}(\phi)$, respectively.\footnote{The interval $[\underline{\tau^\phi}(\phi),
\overline{\tau^\phi}(\phi)]$ contains intractable $\tau^\phi(\phi)$ and can be inferred from $\mathbb{P}(\Phi(X), A, Y)$ and tractable sensitivity parameters.} For this, we adopt the marginal sensitivity model (MSM) for CATE with binary treatment \citep{kallus2019interval,jesson2021quantifying,dorn2022sharp,oprescu2023b,frauen2023sharp}. 

The MSM assumes that the odds ratio between the covariate (complete) propensity scores and the representation (nominal) propensity scores can be bounded. Applied to our setting, the MSM assumes 
\begin{equation} \label{eq:gamma}
\small
    \Gamma(\phi)^{-1} \le \big( \pi_0^\phi(\phi) / \pi_1^\phi(\phi)  \big) \, \big( \pi_1^x(x) / \pi_0^x(x) \big) \le \Gamma(\phi) \quad \text{for all } x \in \mathcal{X} \text{ s.t. } \Phi(x) = \phi,
\end{equation}
where $\Gamma(\phi) \ge 1$ is a representation-dependent sensitivity parameter. In our setting, there are no undeserved confounders, and, therefore, the sensitivity parameters can be directly estimated from combined data $\mathbb{P}(X, \Phi(X), A, Y)$.\footnote{Note that this is a crucial \emph{difference} from other applications of the MSM aimed at settings with unobserved confounding, where, instead, the sensitivity parameter is assumed chosen from background knowledge.} 

We make the following observations. For $\Gamma(\phi) = 1$ for all $\phi$, then (1)~all the information about the propensity score $\pi_a^x(x)$ is preserved in the representation $\Phi(x)$, and (2)~the representation does not contain hidden confounding. If $\Gamma(\phi) \gg 1$, we lose the treatment assignment information and confounding bias could be introduced. Therefore, $\Gamma(\phi)$ indicates how much information is lost about sub-covariates $X^a$ or about $X^\Delta$. In order to actually distinguish whether we lose information specifically about $X^a$ or $X^\Delta$, we would need to know $\mu_a^x(x)$, which makes the task of representation learning for CATE obsolete. Thus, our bounds are conservative in the sense, that they grow if the information on $X^a$ is lost, even though it does not lead to the \RICB.  

Under the assumption in Eq.~\eqref{eq:gamma}, bounds \citep{frauen2023sharp,oprescu2023b} on the \RICB are given by 
\begin{equation}
\label{eq:msm-bounds}
\small
\underline{\tau^\phi}(\phi) = \underline{\mu^\phi_1}(\phi) - \overline{\mu^\phi_0}(\phi)  
\quad \quad \text{and} \quad \quad 
\overline{\tau^\phi}(\phi) = \overline{\mu^\phi_1}(\phi) - \underline{\mu^\phi_0}(\phi)
\end{equation}
with
\begin{align*} 
\small
    \begin{split}
    \underline{\mu^\phi_a}(\phi) &= \frac{1}{s_-(a, \phi)} \int_{-\infty}^{\mathbb{F}^{-1}(c_- \mid a, \phi)} y \, \mathbb{P}(Y = y \mid a, \phi) \diff{y} + \frac{1}{s_+(a, \phi)} \int^{+\infty}_{\mathbb{F}^{-1}(c_- \mid a, \phi)} y \, \mathbb{P}(Y = y \mid a, \phi) \diff{y},\\
    \overline{\mu^\phi_a}(\phi) &= \frac{1}{s_+(a, \phi)} \int_{-\infty}^{\mathbb{F}^{-1}(c_+ \mid a, \phi)} y \, \mathbb{P}(Y = y \mid a, \phi) \diff{y} + \frac{1}{s_-(a, \phi)} \int^{+\infty}_{\mathbb{F}^{-1}(c_+ \mid a, \phi)} y \, \mathbb{P}(Y = y \mid a, \phi) \diff{y}, 
    \end{split}
\end{align*}
where $s_-(a, \phi) = ((1 - \Gamma(\phi))\pi_a^\phi(\phi) + \Gamma(\phi))^{-1}$, $s_+(a, \phi) = ((1 - \Gamma(\phi)^{-1})\pi_a^\phi(\phi) + \Gamma(\phi)^{-1})^{-1}$, $c_- = 1 / (1 + \Gamma(\phi))$, $c_+ = \Gamma(\phi) / (1 + \Gamma(\phi))$, $\mathbb{P}(Y = y \mid a, \phi) = \mathbb{P}(Y = y \mid A = a, \Phi(X) = \phi)$ is a representation-conditional density of the outcome, and $\mathbb{F}^{-1}(c \mid a, \phi)$ its corresponding quantile function. {This result is an adaptation of the theoretic results, provided in \citep{frauen2023sharp,oprescu2023b}. We provide more details on the derivation of the bounds in Lemma~\ref{lemma:bounds} of Appendix~\ref{app:proofs}. Importantly, the bounds provided Eq.~\eqref{eq:msm-bounds} are \textbf{valid} and \textbf{sharp} (see Corollary~\ref{cor:val-sharp} in Appendix~\ref{app:proofs}). Validity means that our bounds always contain the ground-truth CATE wrt. representations, and sharpness means that the bounds only include CATE wrt. representations induced by the observational distributions complying with the sensitivity constraint from Eq.~\eqref{eq:gamma}.}

\begin{figure}
    \vspace{-0.7cm}
    \centering
    \includegraphics[width=\textwidth]{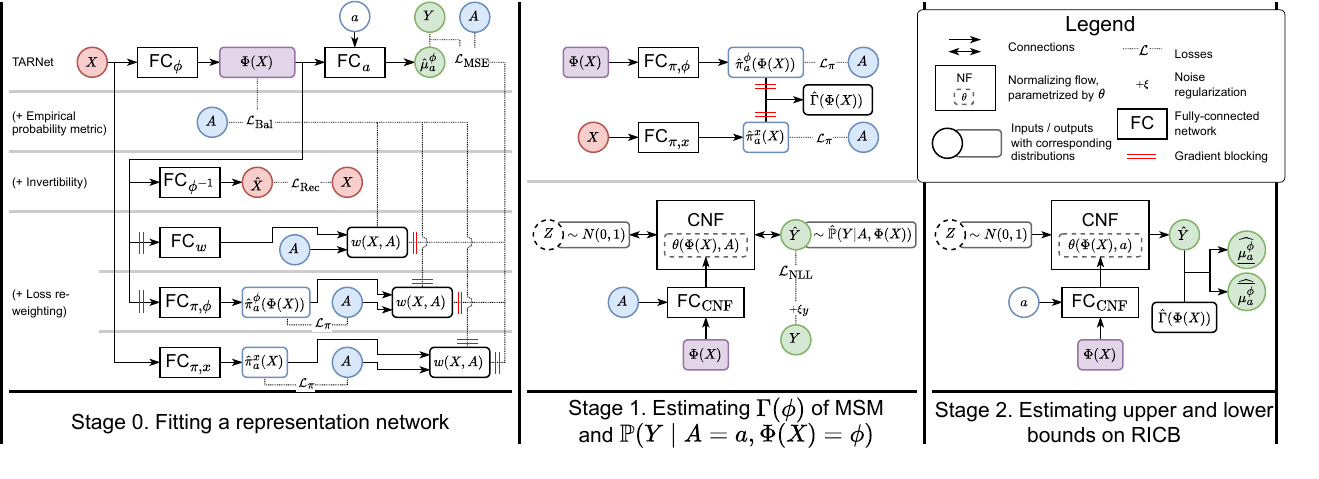}
    \vspace{-0.7cm}
    \caption{Our neural refutation framework for estimating bounds on the \RICB. In \textbf{Stage~0}, we fit some representation learning method for CATE, possibly with different constraints like balancing with an empirical probability metric and invertibility, or loss re-weighting. In \textbf{Stage~1}, we estimate the sensitivity parameters of the MSM, $\Gamma(\phi)$, and the representation-conditional outcome distribution, $\mathbb{P}(Y \mid A = a, \Phi(x) = \phi)$. In \textbf{Stage~2}, we compute the lower and upper bounds on the \RICB.} \label{fig:framework-scheme}
    \vspace{-0.35cm}
\end{figure}

The above bounds are not easy to compute (which motivates our neural refutation framework in the following section). The reason is that the bounds on the \RICB require inference of so-called conditional values at risk (CVaR) \citep{artzner1999coherent,kallus2023treatment}. Formally, we have two CVaRs given by $\int_{-\infty}^{q} \mathbb{P}(Y = y \mid a, \phi)$ and $\int_{q}^{+\infty} \mathbb{P}(Y = y \mid a, \phi)$ for $q \in \{\mathbb{F}^{-1}(c_- \mid a, \phi), \mathbb{F}^{-1}(c_+ \mid a, \phi)\}$. CVaRs can be estimated directly without estimating the conditional density, like in \citep{oprescu2023b}. Yet, in our case $c_-$ and $c_+$ depend on $\phi$, and, thus, it is more practical to estimate the conditional density, as done in \citep{frauen2023sharp}. Given some conditional density estimator, $\hat{\mathbb{P}}(Y = y \mid a, \phi)$, we can then estimate CVaRs via importance sampling, \ie,
\begin{equation} \label{eq:cvar}
\small
    \widehat{\text{CVaR}} = 
        \begin{cases}
            \frac{1}{k} \sum_{i=1}^{\lfloor kc \rfloor} \tilde{y}_i , \quad \quad & \text{for } \text{CVaR} = \int_{-\infty}^{\mathbb{F}^{-1}(c \mid a, \phi)} \mathbb{P}(Y = y \mid a, \phi), \\
            \frac{1}{k} \sum_{i = \lfloor kc \rfloor + 1}^{k} \tilde{y}_i , \quad \quad  & \text{for } \text{CVaR} = \int_{\mathbb{F}^{-1}(c \mid a, \phi)}^{+\infty} \mathbb{P}(Y = y \mid a, \phi),
        \end{cases}
\end{equation}
where $\{ \tilde{y}_i \}_{i=1}^k$ is a sorted sample from $\hat{\mathbb{P}}(Y \mid a, \phi)$. Hence, we use a conditional density estimator in our refutation framework.

\vspace{-0.2cm}
\subsection{Neural Refutation Framework for Estimating Bounds}
\label{sec:estimation_framework}
\vspace{-0.1cm}

In the following, we provide a flexible neural refutation framework for estimating the bounds on the \RICB (see Fig.~\ref{fig:framework-scheme} for the overview). Overall, our refutation framework proceeds in three stages.

\textbf{Stage 0.} The initial stage is a na{\"i}ve application of existing representation learning methods for CATE estimation. As such, we fit a standard representation learning method for the CATE of our choice (\eg, TARNet, CFR, or different variants of CFR). Stage~0 always contains a fully-connected representation subnetwork, FC$_\phi$, which takes covariates $X$ as an input and outputs the representation, $\Phi(X)$. Similarly, potential outcomes predicting subnetwork(s) are also fully-connected, FC$_a$. Together, FC$_\phi$ and FC$_a$ aim to minimize a (potentially weighted) mean squared error of the factual observational data, $\mathcal{L}_{\text{MSE}}$. The representation can be further constrained via (a)~balancing with an empirical probability metric, (b)~invertibility, or have additional (c)~loss re-weighting. These are as follows. (a)~Balancing, $\mathcal{L}_{\text{Bal}}$ is implemented with either Wasserstein metric (WM) or maximum mean discrepancy (MMD) with loss coefficient $\alpha$ \citep{johansson2016learning,shalit2017estimating}. (b)~Invertibility \citep{zhang2020learning} is enforced with a reconstruction subnetwork, FC$_{\phi^{-1}}$, and a reconstruction loss, $\mathcal{L}_{\text{Rec}}$. (c)~Loss re-weighting is done by employing either trainable weights \citep{johansson2018learning,johansson2022generalization}, with a fully-connected FC$_w$; with a representation-propensity subnetwork \citep{hassanpour2019counterfactual,hassanpour2019learning}, FC$_{\pi,\phi}$; or with a covariate-propensity subnetwork \citep{assaad2021counterfactual}, namely FC$_{\pi,x}$.

\textbf{Stage 1.} Here, we use the trained representation subnetwork and then estimate the sensitivity parameters, $\Gamma(\phi)$, and representation-conditional outcome distribution, $\mathbb{P}(Y \mid A = a, \Phi(X) = \phi)$. For that, we train two fully-connected propensity networks, FC$_{\pi,\phi}$ and FC$_{\pi,x}$ (or take them from the Stage~0, if they were used for (c)~loss re-weighting). Both networks optimize a binary cross-entropy (BCE) loss, $\mathcal{L}_\pi$. Then, we use Eq.~\eqref{eq:gamma} to compute $\hat{\Gamma}(\phi_i)$ for $\phi_i = \Phi(x_i)$ with $x_i$ from $\mathcal{D}$. Specifically, each $\hat{\Gamma}(\phi_i)$ is a maximum over all $\hat{\Gamma}(\Phi(x_j))$, where $\Phi(x_j)$ are the representations of the training sample in $\delta$-ball around $\phi_i$. Here, $\delta$ is a hyperparameter, whose misspecification only makes the bounds more conservative but does not influence their validity.  In addition, we estimate the representation-conditional outcome density with a conditional normalizing flow (CNF) \citep{trippe2018conditional} using a fully-connected context subnetwork, FC$_{\text{CNF}}$. The latter aims at minimizing the negative log-likelihood of the observational data, $\mathcal{L}_{\text{NLL}}$.

\textbf{Stage 2.} Finally, we compute the lower and upper bounds on the \RICB, as described in Eq.\eqref{eq:msm-bounds}--\eqref{eq:cvar}. Here, the CNF is beneficial for our task, as it enables direct sampling from the estimated conditional distribution. Further details on implementation details and hyperparameter tuning are in Appendix~\ref{app:baselines-hyperparameters}.

\vspace{-0.2cm}
\section{Experiments}
\vspace{-0.2cm}


\textbf{Setup.} We empirically validate our bounds on the \RICB.\footnote{We also considered other evaluation techniques. However, the ground-truth CATE wrt. representations is intractable; therefore, one can \emph{not} employ established metrics such as coverage or compare MSEs directly. As a remedy, we follow \citet{jesson2021quantifying} and evaluate the decision-making under our CATE, which is closely aligned with how our refutation framework would be used for making reliable decisions in practice.} For that, we use several (semi-)synthetic benchmarks with ground-truth counterfactual outcomes $Y[0]$ and $Y[1]$ and ground-truth CATE $\tau^x(x)$. Inspired by \cite{jesson2021quantifying}, we designed our experiments so that we compare policies based on (a)~the estimated CATE or (b)~the estimated bounds on the \RICB.  In (a), a policy based on the point estimate of the CATE applies a treatment whenever the CATE is positive, i.e. {\small$\hat{\pi}(\phi) = \mathbbm{1}\{\widehat{\tau^\phi}(\phi) > 0\}$}. In (b), a policy $\pi^*(\phi)$ based on the bounds on the \RICB has three decisions: (1)~to treat, i.e., when {\small$\widehat{\underline{\tau^\phi}}(\phi) > 0$}; (2)~to do nothing, i.e., when {\small$\widehat{\overline{\tau^\phi}}(\phi) < 0$}; and (3)~to defer a decision, otherwise.

\begin{table}[tb]
    \vspace{-0.7cm}
    \begin{minipage}{.45\linewidth}
      \caption{Results for synthetic experiments. Reported: out-sample policy error rates (with improvements of our bounds); mean over 10 runs. Here, $n_{\text{train}} = 1,000$ and $\delta = 0.0005$.} \label{tab:synthetic-er}
      \vspace{-0.1cm}
      \begin{center}
            \scriptsize
            \scalebox{0.76}{\begin{tabu}{l|cc}
\toprule
 & \multicolumn{2}{c}{ER$_{\text{out}}$ ($\Delta$ ER$_{\text{out}}$)} \\
\midrule 
$d_\phi$ & 1 & 2 \\
\midrule
\vspace{0.9mm}
TARNet & 30.79\% \textcolor{ForestGreen}{($-$12.89\%)} & 9.82\% \textcolor{ForestGreen}{($-$3.73\%)} \\
BNN (MMD; $\alpha$ = 0.1) & 34.32\% \textcolor{ForestGreen}{($-$15.41\%)} & 16.15\% \textcolor{ForestGreen}{($-$4.19\%)} \\
CFR (MMD; $\alpha$ = 0.1) & 35.01\% \textcolor{ForestGreen}{($-$14.27\%)} & 11.92\% \textcolor{ForestGreen}{($-$5.54\%)} \\
CFR (MMD; $\alpha$ = 0.5) & 35.79\% \textcolor{ForestGreen}{($-$11.43\%)} & 17.89\% \textcolor{ForestGreen}{($-$7.27\%)} \\
CFR (WM; $\alpha$ = 1.0) & 34.97\% \textcolor{ForestGreen}{($-$14.27\%)} & 10.88\% \textcolor{ForestGreen}{($-$7.97\%)} \\
\vspace{0.9mm}
CFR (WM; $\alpha$ = 2.0) & 35.18\% \textcolor{ForestGreen}{($-$13.63\%)} & 13.19\% \textcolor{ForestGreen}{($-$6.28\%)} \\
\vspace{0.9mm}
InvTARNet & 29.51\% \textcolor{ForestGreen}{($-$0.95\%)} & 5.64\% \textcolor{ForestGreen}{($-$0.02\%)} \\
RCFR (WM; $\alpha$ = 1.0) & 33.02\% \textcolor{ForestGreen}{($-$3.58\%)} & 8.00\% \textcolor{ForestGreen}{($-$4.27\%)} \\
CFR-ISW (WM; $\alpha$ = 1.0) & 35.00\% \textcolor{ForestGreen}{($-$9.43\%)} & 7.27\% \textcolor{ForestGreen}{($-$1.86\%)} \\
BWCFR (WM; $\alpha$ = 1.0) & 34.97\% \textcolor{ForestGreen}{($-$10.02\%)} & 7.44\% \textcolor{ForestGreen}{($-$4.57\%)} \\
\bottomrule
 \multicolumn{3}{l}{{Classical CATE estimators: $k$-NN: 8.18\%; BART: 17.37\%; C-Forest: 16.10\% }}
\end{tabu}
}
        \end{center}
    \end{minipage}%
    \hspace{0.2cm}
    \begin{minipage}{.5\linewidth}
      \centering
        \caption{Results for HC-MNIST experiments. Reported: out-sample policy error rates (with improvements of our bounds); mean over 10 runs. Here, $\delta = 0.0005$.} \label{tab:hcmnist-er}
        \vspace{-0.2cm}
        \begin{center}
            \scriptsize
            \scalebox{0.72}{\begin{tabu}{l|ccc}
\toprule
 & \multicolumn{3}{c}{ER$_{\text{out}}$ ($\Delta$ ER$_{\text{out}}$)} \\
\midrule 
$d_\phi$ & 7 & 39 & 78 \\
\midrule
\vspace{0.9mm}
TARNet & 11.21\% \textcolor{ForestGreen}{($-$2.59\%)} & 10.91\% \textcolor{ForestGreen}{($-$3.34\%)} & 11.01\% \textcolor{ForestGreen}{($-$2.62\%)} \\
BNN (MMD; $\alpha$ = 0.1) & 12.00\% \textcolor{ForestGreen}{($-$4.50\%)} & 11.37\% \textcolor{ForestGreen}{($-$5.29\%)} & 20.78\% \textcolor{ForestGreen}{($-$2.01\%)} \\
CFR (MMD; $\alpha$ = 0.1) & 11.40\% \textcolor{ForestGreen}{($-$1.89\%)} & 11.05\% \textcolor{ForestGreen}{($-$3.13\%)} & 11.73\% \textcolor{ForestGreen}{($-$4.67\%)} \\
CFR (MMD; $\alpha$ = 0.5) & 16.01\% \textcolor{BrickRed}{($+$19.25\%)} & 12.55\% \textcolor{ForestGreen}{($-$4.95\%)} & 12.90\% \textcolor{ForestGreen}{($-$5.25\%)} \\
CFR (WM; $\alpha$ = 1.0) & 24.55\% \textcolor{ForestGreen}{($-$10.42\%)} & 27.87\% \textcolor{ForestGreen}{($-$10.18\%)} & 31.19\% \textcolor{ForestGreen}{($-$11.53\%)} \\
\vspace{0.9mm}
CFR (WM; $\alpha$ = 2.0) & 31.71\% \textcolor{ForestGreen}{($-$10.34\%)} & 30.77\% \textcolor{ForestGreen}{($-$7.22\%)} & 31.83\% \textcolor{ForestGreen}{($-$11.91\%)} \\
\vspace{0.9mm}
InvTARNet & 12.18\% \textcolor{ForestGreen}{($-$1.29\%)} & 11.38\% \textcolor{ForestGreen}{($-$3.98\%)} & 11.55\% \textcolor{ForestGreen}{($-$4.34\%)} \\
RCFR (WM; $\alpha$ = 1.0) & 21.51\% \textcolor{ForestGreen}{($-$9.17\%)} & 26.97\% \textcolor{ForestGreen}{($-$6.17\%)} & 30.14\% \textcolor{ForestGreen}{($-$14.26\%)} \\
CFR-ISW (WM; $\alpha$ = 1.0) & 32.64\% \textcolor{ForestGreen}{($-$10.32\%)} & 26.66\% \textcolor{ForestGreen}{($-$11.30\%)} & 30.02\% \textcolor{ForestGreen}{($-$13.31\%)} \\
BWCFR (WM; $\alpha$ = 1.0) & 13.62\% \textcolor{ForestGreen}{($-$3.96\%)} & 28.18\% \textcolor{BrickRed}{($+$0.24\%)} & 32.54\% \textcolor{ForestGreen}{($-$6.75\%)} \\
\bottomrule
\multicolumn{4}{l}{Lower $=$ better. Improvement over the baseline in \textcolor{ForestGreen}{green}, worsening of the baseline in \textcolor{BrickRed}{red}}
\\ \multicolumn{4}{l}{{Classical CATE estimators: $k$-NN: 22.34\%; BART: 17.51\%; C-Forest: 17.65\% }}
\end{tabu}
}
        \end{center}
    \end{minipage} 
    \vspace{-0.55cm}
\end{table}

\begin{table}[t]
    \vspace{-0.7cm}
    \caption{Results for IHDP100 experiments. Reported: out-sample policy error rates (with improvements of our bounds); mean over 100 train/test splits. Here, $\delta = 0.0005$.} \label{tab:ihdp100-er}
    \vspace{-0.15cm}
    \begin{center}
        \scriptsize
        \scalebox{0.8}{\begin{tabu}{l|cccccc}
\toprule
 & \multicolumn{5}{c}{ER$_{\text{out}}$ ($\Delta$ ER$_{\text{out}}$)} \\
\midrule 
$d_\phi$ & 5 & 10 & 15 & 20 & 25 \\
\midrule
\vspace{0.9mm}
TARNet & 3.17\% \textcolor{ForestGreen}{($-$2.65\%)} & 2.88\% \textcolor{ForestGreen}{($-$2.30\%)} & 3.28\% \textcolor{ForestGreen}{($-$2.74\%)} & 3.23\% \textcolor{ForestGreen}{($-$2.52\%)} & 2.89\% \textcolor{ForestGreen}{($-$2.37\%)} \\
BNN (MMD; $\alpha$ = 0.1) & 2.32\% \textcolor{ForestGreen}{($-$1.49\%)} & 2.43\% \textcolor{ForestGreen}{($-$1.40\%)} & 2.59\% \textcolor{ForestGreen}{($-$2.03\%)} & 2.43\% \textcolor{ForestGreen}{($-$1.87\%)} & 2.29\% \textcolor{ForestGreen}{($-$1.16\%)} \\
CFR (MMD; $\alpha$ = 0.1) & 1.77\% \textcolor{ForestGreen}{($-$0.89\%)} & 2.09\% \textcolor{ForestGreen}{($-$1.03\%)} & 2.23\% \textcolor{ForestGreen}{($-$1.63\%)} & 1.88\% \textcolor{ForestGreen}{($-$0.48\%)} & 2.04\% \textcolor{ForestGreen}{($-$1.46\%)} \\
CFR (MMD; $\alpha$ = 0.5) & 2.07\% \textcolor{ForestGreen}{($-$1.46\%)} & 2.00\% \textcolor{BrickRed}{($+$3.98\%)} & 2.68\% \textcolor{BrickRed}{($+$1.89\%)} & 2.36\% \textcolor{BrickRed}{($+$6.37\%)} & 2.17\% \textcolor{BrickRed}{($+$3.41\%)} \\
CFR (WM; $\alpha$ = 1.0) & 1.93\% \textcolor{ForestGreen}{($-$0.89\%)} & 1.75\% \textcolor{ForestGreen}{($-$0.25\%)} & 1.83\% \textcolor{ForestGreen}{($-$1.24\%)} & 1.83\% \textcolor{ForestGreen}{($-$0.49\%)} & 1.80\% \textcolor{ForestGreen}{($-$0.20\%)} \\
\vspace{0.9mm}
CFR (WM; $\alpha$ = 2.0) & 1.97\% \textcolor{ForestGreen}{($-$0.04\%)} & 2.17\% \textcolor{ForestGreen}{($-$1.49\%)} & 2.05\% \textcolor{ForestGreen}{($-$1.21\%)} & 2.08\% \textcolor{ForestGreen}{($-$1.29\%)} & 2.09\% \textcolor{ForestGreen}{($-$1.36\%)} \\
\vspace{0.9mm}
InvTARNet & 2.52\% \textcolor{ForestGreen}{($-$1.95\%)} & 3.11\% \textcolor{ForestGreen}{($-$2.47\%)} & 2.99\% \textcolor{ForestGreen}{($-$2.51\%)} & 2.79\% \textcolor{ForestGreen}{($-$2.41\%)} & 2.83\% \textcolor{ForestGreen}{($-$2.28\%)} \\
RCFR (WM; $\alpha$ = 1.0) & 3.36\% \textcolor{ForestGreen}{($-$2.84\%)} & 3.45\% \textcolor{ForestGreen}{($-$1.52\%)} & 2.67\% \textcolor{ForestGreen}{($-$1.57\%)} & 4.69\% \textcolor{ForestGreen}{($-$3.83\%)} & 1.95\% \textcolor{BrickRed}{($+$1.06\%)} \\
CFR-ISW (WM; $\alpha$ = 1.0) & 2.24\% \textcolor{ForestGreen}{($-$0.96\%)} & 1.93\% \textcolor{ForestGreen}{($-$0.68\%)} & 1.71\% \textcolor{ForestGreen}{($-$1.18\%)} & 1.85\% \textcolor{ForestGreen}{($-$1.54\%)} & 1.88\% \textcolor{ForestGreen}{($-$0.19\%)} \\
BWCFR (WM; $\alpha$ = 1.0) & 3.57\% \textcolor{ForestGreen}{($-$1.49\%)} & 3.52\% \textcolor{ForestGreen}{($-$2.16\%)} & 3.88\% \textcolor{ForestGreen}{($-$1.10\%)} & 3.80\% \textcolor{ForestGreen}{($-$2.38\%)} & 4.07\% \textcolor{ForestGreen}{($-$1.18\%)} \\
\bottomrule
\multicolumn{6}{l}{Lower $=$ better. Improvement over the baseline in \textcolor{ForestGreen}{green}, worsening of the baseline in \textcolor{BrickRed}{red}}
\\ \multicolumn{6}{l}{{Classical CATE estimators: $k$-NN: 7.47\%; BART: 5.07\%; C-Forest: 6.28\% }}
\end{tabu}
}
    \end{center}
    \vspace{-0.8cm}
\end{table}

\textbf{Evaluation metric.} To compare our bounds with the point estimates, we employ an error rate of the policy (ER). ER is defined as the rate of how often decisions of the estimated policy are different from the decision of the optimal policy, $\pi(x) = \mathbbm{1}\{\tau^x(x) > 0\}$. For the policy based on our bounds, we report the error rate on the non-deferred decisions. {Hence, improvements over the baselines due to our refutation framework would imply that our bounds are precise and that we defer the right observations/individuals.} Additionally, we report a root precision in estimating treatment effect (rPEHE) of original methods in Appendix~\ref{app:add-results}.

\textbf{Baselines.} We combine our refutation framework with the state-of-the-art baselines from representation learning. These are: \textbf{TARNet} \citep{shalit2017estimating,johansson2022generalization} implements a representation network without constraints; \textbf{BNN} \citep{johansson2016learning} enforces balancing with MMD ($\alpha = 0.1$); \textbf{CFR} \citep{shalit2017estimating,johansson2022generalization} is used in four variants of balancing with WM ($\alpha = 1.0 / 2.0$) and with MMD ($\alpha = 0.1/0.5$). \textbf{InvTARNet} adds an invertibility constraint to the TARNet via the reconstruction loss, as in \citep{zhang2020learning}. Finally, three methods use additional loss re-weighting on top of the balancing with WM ($\alpha = 1.0$): \textbf{RCFR} \citep{johansson2018learning,johansson2022generalization}, \textbf{CFR-ISW} \citep{hassanpour2019counterfactual}, and \textbf{BWCFR} \citep{assaad2021counterfactual}. {Also, for reference, we provide results for classical (non-neural) CATE estimators, \ie, \textbf{$k$-NN} regression, Bayesian additive regression trees (\textbf{BART}) \citep{chipman2010bart}, and causal forests (\textbf{C-Forest}) \citep{wager2018estimation}.}


{\tiny$\blacksquare$}~\textbf{Synthetic data.} We adapt the synthetic data generator from \citet{kallus2019interval}, where we add an unobserved confounder to the observed covariates so that $d_x = 2$. We sample $n_{\text{train}} \in \{500; 1,000; 5,000; 10,000 \}$ training and $n_{\text{test}} = 1,000 $ datapoints. Further details are in Appendix~\ref{app:dataset-details}. 
We plot ground-truth and estimated decision boundaries in Appendix~\ref{app:add-results}.
Additionally, in Table~\ref{tab:synthetic-er}, we provide error rates of the original representation learning methods and improvements in error rates with our bounds. Our refutation framework achieves clear improvements in the error rate among all the baselines. This improvement is especially large, when $d_\phi = 1$, so that there are both the loss of heterogeneity and the \RICB.  

{\tiny$\blacksquare$}~\textbf{IHDP100 dataset.} The Infant Health and Development Program (IHDP) \citep{hill2011bayesian,shalit2017estimating} is a classical benchmark for CATE estimation and consists of the 100 train/test splits with $n_{\text{train}} = 672, n_{\text{test}} = 75, d_x = 25$. Again, our refutation framework improves the error rates for almost all of the baselines (Table~\ref{tab:ihdp100-er}). We observe one deviation for CFR (MMD; $\alpha = 0.5$), but this can be explained by too large balancing with an empirical probability metric and the low-sample size. 


{\tiny$\blacksquare$}~\textbf{HC-MNIST dataset.} HC-MNIST is a semi-synthetic benchmark on top of the MNIST image dataset \citep{jesson2021quantifying}. We consider all covariates as observed (see Appendix~\ref{app:dataset-details}). The challenge in CATE estimation comes from the high-dimensionality of covariates, \ie, $d_x = 784 + 1$. Again, our refutation framework improves over the baselines by a clear margin (see Table~\ref{tab:hcmnist-er}).


\textbf{Additional results.} In Appendix~\ref{app:add-results}, we provide additional results, where we report deferral rates in addition to the error rates for different values of the hyperparameter $\delta$. Our refutation framework reduces the error rates while increasing the deferral rates not too much, which demonstrates its effectiveness. 

\textbf{Conclusion.} We studied the validity of representation learning for CATE estimation. The validity may be violated due to low-dimensional representations as these introduce a \RICBfull. As a remedy, we introduced a novel, representation-agnostic refutation framework that practitioners can use to estimate bounds on the \RICB and thus improve the reliability of their CATE estimator.

\newpage
\textbf{Acknowledgments.} S.F. acknowledges funding via Swiss National Science Foundation Grant 186932.

\bibliography{bibliography}
\bibliographystyle{iclr2024_conference}

\appendix
\newpage
\section{Extended Related Work} \label{app:extended-rw}
\textbf{CATE estimation.} A wide range of machine learning methods have been employed for estimating CATE. Examples include tailored methods built on top of linear regression \cite{johansson2016learning}, regression trees \citep{chipman2010bart,hill2011bayesian}, and random forests \citep{wager2018estimation}. \cite{alaa2017bayesian} proposed non-parametric approaches based on Gaussian processes. Later, it was shown in \citep{alaa2018bayesian,alaa2018limits}, that Gaussian processes can achieve optimal minimax rates for non-parametric estimation, but only in asymptotic regime and under smoothness conditions.\footnote{One important result of \citep{alaa2018limits} is that minimax rate of non-parametric estimation of CATE does not asymptotically depend on \emph{treatment imbalance}. In a low-sample regime nevertheless, addressing treatment imbalance is still important.} Another stream of literature studied meta-learners \citep{kunzel2019metalearners,kennedy2023towards,nie2021quasi,kennedy2022minimax}, which combine different nuisance function estimators; and double/debiased machine learning \citep{chernozhukov2017double}, which turns CATE estimation into two-stage regression. Both approaches can achieve optimal convergence rates of CATE estimation, but, again, only asymptotically and with well-specified models. \cite{curth2021inductive} and \cite{curth2021nonparametric} specifically studied neural networks as base model classes for meta-learners. According to \citep{curth2021nonparametric}, representation learning methods are one-step (plug-in) learners and they aim at best-in-class estimation of CATE, given potentially misspecified parametric model classes \citep{johansson2022generalization}.

\textbf{Prognostic scores.} Inferring prognostic scores \citep{hansen2008prognostic} is relevant to CATE estimation with representation learning. The prognostic score is such a transformation of covariates, which preserves all the information about a potential outcome and can be seen as a sufficient dimensionality reduction technique for CATE. For example, linear prognostic scores were rigorously studied in \citep{hu2014estimation,huang2017joint}. As mentioned by \citep{johansson2022generalization}, representation learning CATE estimators aim to learn approximate non-linear prognostic scores.

\textbf{Uncertainty of CATE estimation.} Several works studied uncertainty around CATE estimation. For example, \cite{jesson2020identifying} studied epistemic uncertainty, and \cite{jesson2021quantifying} additionally considered bias due to unobserved confounding, which was a part of aleatoric uncertainty. Our work differs from both of them, as we aim at estimating the confounding bias, present in low-dimensional (potentially constrained) representations. This bias still persists in an infinite-sample regime, where epistemic uncertainty drops to zero.

\textbf{Runtime confounding.} Runtime confounding \citep{coston2020counterfactual} arises when a subset of covariates is not available during prediction time. This setting is highly relevant, \eg, for multi-step-ahead prediction of time-varying methods where future time-varying covariates are not observed; and (questionably) some methods used balanced representations as a heuristic to circumvent this bias \citep{bica2020estimating,melnychuk2022causal,seedat2022continuous,hess2024bayesian}. Notably, non-availability of covariates can be seen as a special case of  low-dimensional representations which zero-out some inputs and have identity transformations for the others. Therefore, runtime confounding is a special case of our proposed \RICB.

\newpage
\section{Proofs} \label{app:proofs}
\begin{lemma} \label{lemmma:inv}
    Let $X$ be partitioned in a cluster of sub-covariates, $X = \{X^\varnothing, X^a, X^y, X^\Delta\}$. If the following independencies hold
    \begin{equation}
        X \ind X^\varnothing \mid \Phi(X), \quad X \ind X^a \mid \Phi(X), \quad X \ind X^y \mid \Phi(X), \quad X \ind X^\Delta \mid \Phi(X),
    \end{equation}
    then $\Phi(\cdot)$ is an invertible transformation.
\end{lemma}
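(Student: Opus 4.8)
The plan is to show that each sub-covariate $X^j$, $j \in \{\varnothing, a, y, \Delta\}$, is almost surely a deterministic function of $\Phi(X)$, and then to assemble these into a measurable left inverse of $\Phi$. The starting observation is that $X = \{X^\varnothing, X^a, X^y, X^\Delta\}$ is the tuple of the four sub-covariates, so each $X^j$ equals a coordinate projection $g_j(X)$ of $X$, and the hypothesis $X \ind X^j \mid \Phi(X)$ is thus $X \ind g_j(X) \mid \Phi(X)$ with $g_j$ deterministic.

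First I would record the elementary fact that a coarsening of $X$ that is conditionally independent of $X$ given $\Phi(X)$ is conditionally degenerate. Fix $\phi$ in the support and a measurable set $B$; applying the product formula to the events $\{X \in g_j^{-1}(B)\}$ and $\{g_j(X) \in B\}$, which coincide, gives
\[
\mathbb{P}\bigl(g_j(X) \in B \mid \Phi(X) = \phi\bigr) = \mathbb{P}\bigl(g_j(X) \in B \mid \Phi(X) = \phi\bigr)^2 ,
\]
so this probability lies in $\{0,1\}$ for every $B$. Hence the regular conditional distribution of $X^j$ given $\Phi(X) = \phi$ is, for $\mathbb{P}(\Phi(X))$-almost every $\phi$, a point mass; a measurable selection of its atom (available because the covariate spaces are standard Borel) yields a measurable map $\psi_j$ with $X^j = \psi_j(\Phi(X))$ almost surely.

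Next I would combine the four recoveries. Setting $\Psi = (\psi_\varnothing, \psi_a, \psi_y, \psi_\Delta)$ gives a measurable map with $X = \Psi(\Phi(X))$ almost surely. Consequently $\Phi$ is injective on a set of full measure — if $\Phi(x) = \Phi(x')$ for two such points then $x = \Psi(\Phi(x)) = \Psi(\Phi(x')) = x'$ — and $\Psi$ restricted to the image of $\Phi$ plays the role of $\Phi^{-1}$, so that $\Phi^{-1}(\Phi(X)) = X$ a.s., which is precisely the notion of invertibility used around Eq.~\eqref{eq:inv-ind}.

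The main obstacle is the measure-theoretic bookkeeping in the first step: passing from ``the conditional law of $X^j$ given $\Phi(X)$ is almost surely a point mass'' to an honest jointly measurable function $\psi_j$ requires invoking existence of regular conditional distributions on standard Borel spaces together with a measurable selection of the (a.s. unique) supporting atom. Once that is set up, the assembly of $\Psi$ and the deduction of injectivity are routine, and no hypothesis beyond the four stated independencies is needed.
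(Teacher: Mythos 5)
Your proof is correct and follows essentially the same route as the paper's: both arguments show that the conditional law of each sub-covariate given $\Phi(X)=\phi$ is a point mass, and then assemble the coordinate-wise recoveries into a left inverse of $\Phi$. The only difference is in how the degeneracy is obtained --- you use the self-independence/idempotence argument $p = p^2 \Rightarrow p \in \{0,1\}$, whereas the paper conditions additionally on $X = x$ and uses that $X^j$ is deterministic given $X$ --- and your version is, if anything, more careful about the measure-theoretic details (regular conditional distributions and measurable selection of the atom).
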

\begin{proof}
    Without a loss of generality, let us consider a posterior distribution of some sub-covariate, \eg, $X^a$, given a representation $\Phi(X) = \phi$, \ie, $\mathbb{P}(X^a \mid \Phi(X) = \phi)$. Due to the independence, the following holds
    \begin{equation} \label{eq:indep-ibnvert}
        \mathbb{P}(X^a \mid \Phi(X) = \phi) =  \mathbb{P}(X^a \mid \Phi(X) = \phi, X = x) = \mathbb{P}(X^a \mid X = x),
    \end{equation}
    where the last equality follows from the fact that $\Phi(\cdot)$ is a measurable function. It is easy to see that $\mathbb{P}(X^a \mid X = x)$ is a Dirac-delta distribution. Therefore,  for every $\phi$, $\mathbb{P}(X^a \mid \Phi(X) = \phi)$ is also a Dirac-delta distribution, namely it puts a point mass on some $x^a$. In this way, we can define an inverse of $\Phi(\cdot)$ wrt. $\phi$. 
    
    The equalities in Eq.~\eqref{eq:indep-ibnvert} also hold for other sub-covariates of $X$, and, thus, we can construct a full inverse of $\Phi(\cdot)$.
 \end{proof}

{\begin{lemma}[Removal of noise and instruments] \label{lemma:valid-repr}
     Let $X = \{X^\varnothing, X^a, X^y, X^\Delta \}$ be partitioned according to the clustered causal diagram in Fig.~\ref{fig:diag-valid}. Then, representation $\Phi(\cdot)$ is valid if 
     \begin{equation} \label{eq:ind-valid}
         X \ind X^\Delta \mid \Phi(X), \quad X \ind X^y \mid \Phi(X).
     \end{equation}
 \end{lemma}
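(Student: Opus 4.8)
The statement to prove is Lemma~\ref{lemma:valid-repr}: under the conditional independencies $X \ind X^\Delta \mid \Phi(X)$ and $X \ind X^y \mid \Phi(X)$, the representation $\Phi(\cdot)$ is valid for CATE, i.e., it satisfies both equalities $(i)$ and $(ii)$ in Eq.~\eqref{eq:validity}. The plan is to follow the chain of equalities already sketched in the body of the paper right after Eq.~\eqref{eq:ind-valid} is stated, but now spelling out why each link holds. There are really three ingredients: (1) reduce the covariate-level quantities from depending on all of $X$ to depending only on $(X^\Delta, X^y)$ using the clustered causal diagram; (2) use the hypothesized independencies to argue that $\Phi(X)$ retains all information about $X^\Delta$ and $X^y$ — in fact, by the same Dirac-delta argument as in Lemma~\ref{lemmma:inv}, conditioning on $\Phi(X)=\phi$ renders $(X^\Delta, X^y)$ deterministic, so that $\Phi$ is ``invertible with respect to $(X^\Delta,X^y)$''; and (3) combine (1) and (2) to rewrite $\tau^\phi(\phi)$ and $\mu^\phi_a(\phi)$ in terms of $\tau^x$ and $\mu^x_a$, yielding both equalities.

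First I would establish step (1). Since $X = \{X^\varnothing, X^a, X^y, X^\Delta\}$ and the clustered causal diagram (Fig.~\ref{fig:diag-valid}) shows that only the confounders $X^\Delta$ and the outcome-predictive covariates $X^y$ have directed paths into $Y$, while $X^\varnothing$ (noise) and $X^a$ (instruments) are d-separated from $(Y[0], Y[1])$ given $(X^\Delta, X^y)$, we get $(Y[0], Y[1]) \ind (X^\varnothing, X^a) \mid (X^\Delta, X^y)$. Hence $\mathbb{E}(Y[1] - Y[0] \mid X=x) = \mathbb{E}(Y[1] - Y[0] \mid X^\Delta = x^\Delta, X^y = x^y)$, and similarly $\mathbb{E}(Y \mid A=a, X=x) = \mathbb{E}(Y \mid A=a, X^\Delta=x^\Delta, X^y=x^y)$ — the latter also using that exchangeability $A \ind (Y[0],Y[1]) \mid X$ together with the d-separation implies $A \ind (Y[0],Y[1]) \mid (X^\Delta, X^y)$, so conditioning on the factual treatment and on $(X^\Delta,X^y)$ recovers the interventional mean.

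Next, step (2): from $X \ind X^\Delta \mid \Phi(X)$ and $X \ind X^y \mid \Phi(X)$, the argument of Lemma~\ref{lemmma:inv} gives that $\mathbb{P}(X^\Delta \mid \Phi(X)=\phi)$ and $\mathbb{P}(X^y \mid \Phi(X)=\phi)$ are Dirac-delta (point mass) distributions for every $\phi$ in the support, i.e., $(X^\Delta, X^y)$ is a deterministic function of $\Phi(X)$. Denote this map $\phi \mapsto (x^\Delta(\phi), x^y(\phi))$. Then I would compute, for equality $(i)$,
\begin{align*}
\tau^\phi(\phi) = \mathbb{E}(Y[1]-Y[0] \mid \Phi(X)=\phi)
&= \mathbb{E}\big(\mathbb{E}(Y[1]-Y[0] \mid \Phi(X)=\phi, X^\Delta, X^y) \,\big|\, \Phi(X)=\phi\big) \\
&= \mathbb{E}(Y[1]-Y[0] \mid X^\Delta = x^\Delta(\phi), X^y = x^y(\phi)) = \tau^x(x),
\end{align*}
for any $x$ with $\Phi(x)=\phi$, where the second line uses that $(X^\Delta,X^y)$ is $\phi$-a.s.\ constant and that $\Phi(X)$ adds no further information once $(X^\Delta, X^y)$ is fixed (this last point follows because, conditionally on the deterministic pair, $\Phi(X)$ is itself deterministic, or can be absorbed), and step (1) for the last equality. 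An entirely analogous computation conditioning additionally on $A=a$, using $A \ind (Y[0],Y[1]) \mid (X^\Delta,X^y)$, gives $\mu^\phi_a(\phi) = \mathbb{E}(Y \mid A=a, X^\Delta=x^\Delta(\phi), X^y = x^y(\phi)) = \mu^x_a(x)$; subtracting the $a=0$ case from the $a=1$ case and using equality $(i)$ and the identifiability of $\tau^x$ yields equality $(ii)$, $\tau^\phi(\phi) = \mu^\phi_1(\phi) - \mu^\phi_0(\phi)$.

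\textbf{Main obstacle.} The delicate point is step (2), specifically making rigorous that ``$\Phi(X)$ carries no information beyond the deterministic pair $(X^\Delta, X^y)$'' so that one may legitimately drop $\Phi(X)=\phi$ from the conditioning set after inserting $(X^\Delta, X^y)$; one must be careful that $\Phi$ could in principle be a non-injective function of $(X^\Delta, X^y)$ together with other coordinates, but the Dirac-delta conclusion forces $\phi$ to pin down $(x^\Delta, x^y)$, and then d-separation handles the rest — the subtlety is purely in the measure-theoretic bookkeeping of conditioning on a measurable deterministic function, which is exactly the kind of argument already used (and deemed acceptable) in the proof of Lemma~\ref{lemmma:inv}. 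A secondary point to handle carefully is the translation from exchangeability w.r.t.\ $X$ to exchangeability w.r.t.\ $(X^\Delta, X^y)$ via d-separation in the clustered diagram, which should be stated explicitly since it is what licenses replacing the interventional contrast by the observational regression on $(X^\Delta, X^y)$.
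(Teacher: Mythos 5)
Your proposal is correct and follows essentially the same route as the paper's proof: d-separation in the (parallel-worlds) causal diagram to reduce conditioning from $X$ to $(X^\Delta, X^y)$, the Dirac-delta argument of Lemma~\ref{lemmma:inv} to get invertibility of $\Phi$ with respect to $(X^\Delta, X^y)$, and a second d-separation of $Y[a]$ from $A$ given $(X^\Delta, X^y)$ to replace potential outcomes by observed ones. Your extra care in justifying why $\Phi(X)$ may be dropped from the conditioning set once $(X^\Delta, X^y)$ is inserted (via $Y[a]\ind X \mid X^\Delta, X^y$, not via $\Phi(X)$ being deterministic given the pair, which need not hold) is a point the paper glosses over but does not change the argument.
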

 \begin{proof}
    First, it is easy to see that the potential outcomes are d-separated from $X$ by conditioning on $X^\Delta$ and $X^y$, \ie, $X \ind Y[a] \mid X^\Delta, X^y$ or, equivalently, $\mathbb{P}(Y[a] \mid X = x) = \mathbb{P}(Y[a] \mid X^\Delta = x^\Delta, X^y = x^y)$. This is an immediate result of applying a d-separation criterion to an extended causal diagram in Fig.~\ref{fig:caus-diag-d-sep} (also known as parallel worlds network), \ie, $X^\Delta$ and $X^y$ d-separate $Y[a]$ from $X^\varnothing$ and $X^a$. Therefore, the following holds
    \begin{equation}
        \tau^x(x) = \mathbb{E}(Y[1] - Y[0] \mid X = x) = \mathbb{E}(Y[1] - Y[0] \mid X^\Delta = x^\Delta, X^y = x^y).
    \end{equation}
    
    \begin{figure}[hb]
        \vspace{-0.3cm}
        \centering
        \includegraphics[width=0.3\textwidth]{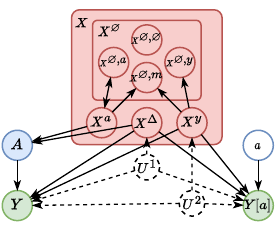}
        \vspace{-0.3cm}
        \caption{{Parallel worlds network with observed outcome $Y$ and potential outcome $Y[a]$. In the diagram, (1) $X^\Delta$ and $X^y$ d-separate $Y[a]$ from $X^\varnothing$  and $X^a$; and (2) $X^\Delta$ and $X^y$ d-separate $Y[a]$ from $A$.}}
        \label{fig:caus-diag-d-sep}
    \end{figure}
    
    Furthermore, due to the invertibility of $\Phi(\cdot)$ wrt. $X^\Delta$ and $X^y$, which follows from Eq.~\eqref{eq:ind-valid} and Lemma~\ref{lemmma:inv}, we have
    \begin{equation}
        \mathbb{E}(Y[1] - Y[0] \mid X^\Delta = x^\Delta, X^y = x^y) = \mathbb{E}(Y[1] - Y[0] \mid \Phi(X) = \Phi(x)) = \tau^\phi(\Phi(x)). 
    \end{equation}

    Second, $Y[a]$ is also d-separated from $A$ by $X^\Delta$ and $X^y$ (see Fig.~\ref{fig:caus-diag-d-sep}). Therefore, we can employ assumptions (1)--(3) and substitute potential outcomes with observed outcomes:
    \begin{align}
        & \mathbb{E}(Y[1] - Y[0] \mid X^\Delta = x^\Delta, X^y = x^y) \\
        = & \mathbb{E}(Y \mid A = 1, X^\Delta = x^\Delta, X^y = x^y) - \mathbb{E}(Y \mid A = 0, X^\Delta = x^\Delta, X^y = x^y) \\
        = & \mu_1^\phi(\Phi(x)) - \mu_0^\phi(\Phi(x)),
    \end{align}
    where the latter holds due to the invertibility.
 \end{proof}

 \begin{lemma}[MSM bounds on the \RICB] \label{lemma:bounds} 
    Let a representation $\phi = \Phi(x)$ satisfy the following sensitivity constraint:
    \begin{equation} \label{eq:gamma-app}
        \Gamma(\phi)^{-1} \le \big( \pi_0^\phi(\phi) / \pi_1^\phi(\phi)  \big) \, \big( \pi_1^x(x) / \pi_0^x(x) \big) \le \Gamma(\phi) \quad \text{for all } x \in \mathcal{X} \text{ s.t. } \Phi(x) = \phi,
    \end{equation}
    where $\Gamma(\phi) \ge 1$ is a representation-dependent sensitivity parameter.
 
    Under the assumption in Eq.~\eqref{eq:gamma-app}, bounds \citep{frauen2023sharp,oprescu2023b} on the \RICB are given by 
    \begin{equation}
    \label{eq:msm-bounds-app}
    \underline{\tau^\phi}(\phi) = \underline{\mu^\phi_1}(\phi) - \overline{\mu^\phi_0}(\phi)  
    \quad \quad \text{and} \quad \quad 
    \overline{\tau^\phi}(\phi) = \overline{\mu^\phi_1}(\phi) - \underline{\mu^\phi_0}(\phi)
    \end{equation}
    with
    \begin{align*} 
    \small
        \begin{split}
        \underline{\mu^\phi_a}(\phi) &= \frac{1}{s_-(a, \phi)} \int_{-\infty}^{\mathbb{F}^{-1}(c_- \mid a, \phi)} y \, \mathbb{P}(Y = y \mid a, \phi) \diff{y} + \frac{1}{s_+(a, \phi)} \int^{+\infty}_{\mathbb{F}^{-1}(c_- \mid a, \phi)} y \, \mathbb{P}(Y = y \mid a, \phi) \diff{y}, \\
        \overline{\mu^\phi_a}(\phi) &= \frac{1}{s_+(a, \phi)} \int_{-\infty}^{\mathbb{F}^{-1}(c_+ \mid a, \phi)} y \, \mathbb{P}(Y = y \mid a, \phi) \diff{y} + \frac{1}{s_-(a, \phi)} \int^{+\infty}_{\mathbb{F}^{-1}(c_+ \mid a, \phi)} y \, \mathbb{P}(Y = y \mid a, \phi) \diff{y}, 
        \end{split}
    \end{align*}
    where $s_-(a, \phi) = ((1 - \Gamma(\phi))\pi_a^\phi(\phi) + \Gamma(\phi))^{-1}$, $s_+(a, \phi) = ((1 - \Gamma(\phi)^{-1})\pi_a^\phi(\phi) + \Gamma(\phi)^{-1})^{-1}$, $c_- = 1 / (1 + \Gamma(\phi))$, $c_+ = \Gamma(\phi) / (1 + \Gamma(\phi))$, $\mathbb{P}(Y = y \mid a, \phi) = \mathbb{P}(Y = y \mid A = a, \Phi(X) = \phi)$ is a representation-conditional density of the outcome, and $\mathbb{F}^{-1}(c \mid a, \phi)$ its corresponding quantile function.
 \end{lemma}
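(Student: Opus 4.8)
The plan is to reduce the claimed bounds to the known sharp MSM bounds for CATE with a binary treatment under unobserved confounding, as established in \citep{frauen2023sharp,oprescu2023b}, by identifying the ``representation level'' as the observed level and the ``covariate level'' as the (finer) level that plays the role of the hidden confounder. Concretely, I would first observe that, because exchangeability holds with respect to the full covariates $X$ (assumption (iii)), we have the identity $\mu_a^\phi(\phi) = \mathbb{E}(\mu_a^x(X) \mid \Phi(X) = \phi, A = a)$, whereas the target $\tau^\phi(\phi) = \mathbb{E}(Y[1]-Y[0]\mid \Phi(X)=\phi)$ requires averaging $\mu_a^x(X)$ over $\mathbb{P}(X \mid \Phi(X)=\phi)$ rather than $\mathbb{P}(X \mid \Phi(X)=\phi, A=a)$. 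The discrepancy between these two conditional laws is exactly a selection/confounding effect, and the density ratio between them is controlled by the propensity odds ratio $(\pi_0^\phi(\phi)/\pi_1^\phi(\phi))(\pi_1^x(x)/\pi_0^x(x))$, which is precisely the quantity bounded by $\Gamma(\phi)^{\pm 1}$ in Eq.~\eqref{eq:gamma-app}.

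Next I would make this correspondence precise by a change-of-measure argument. Writing $\mathbb{P}(Y=y\mid a,\phi)$ for the observed representation-conditional outcome density, the unobserved ``full-information'' density $\mathbb{P}(Y=y\mid \Phi(X)=\phi)$ (marginalizing $A$ within the $\phi$-fiber, which is what $\tau^\phi$ needs) is obtained by reweighting $\mathbb{P}(Y=y\mid a,\phi)$ by a likelihood ratio $w(x)$ that, under Eq.~\eqref{eq:gamma-app}, is pointwise sandwiched in a known interval. This is structurally identical to the standard MSM reweighting in \citep{frauen2023sharp}: there, $\mathbb{E}(Y[a])$ on the hidden-confounding side is a reweighting of the observed $\mathbb{E}(Y\mid A=a,\cdot)$ by a density ratio bounded by $\Gamma^{\pm1}$. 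Optimizing the reweighting over all admissible $w$ subject to the normalization constraint $\int w\,d\mathbb{P}(Y\mid a,\phi)=1$ and the box constraint is a fractional linear program whose extremal solution is the well-known ``bang-bang'' tilt: put the extreme weights on the lower/upper tail beyond the quantile $\mathbb{F}^{-1}(c_\mp\mid a,\phi)$ with $c_\mp = 1/(1+\Gamma),\ \Gamma/(1+\Gamma)$, and normalize by $s_\mp(a,\phi)$. I would then simply invoke this extremal characterization (citing \citep[Thm.~(sharp MSM bounds)]{frauen2023sharp} and \citep{oprescu2023b}) to conclude that $\underline{\mu_a^\phi}(\phi)$ and $\overline{\mu_a^\phi}(\phi)$ as written are the tightest lower/upper bounds on the counterfactual mean $\mathbb{E}(Y[a]\mid\Phi(X)=\phi)$ consistent with Eq.~\eqref{eq:gamma-app}, and finally combine the two arms via $\tau^\phi(\phi)=\mathbb{E}(Y[1]\mid\phi)-\mathbb{E}(Y[0]\mid\phi)$, using that the $a=1$ and $a=0$ tilts are chosen independently (the constraint couples $x$ within a fiber but not across treatment arms), so the sharp CATE bounds are the difference of the individually sharp arm bounds, giving Eq.~\eqref{eq:msm-bounds-app}.

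The main obstacle, and the step deserving the most care, is verifying that the MSM inequality stated at the \emph{representation} level, Eq.~\eqref{eq:gamma-app}, genuinely translates into the density-ratio box constraint that the cited theorems assume at the ``observed vs.\ latent'' level. In the classical unobserved-confounding MSM, $\Gamma$ bounds the ratio of the \emph{nominal} propensity $\mathbb{P}(A=a\mid \text{observed})$ to the \emph{true} propensity $\mathbb{P}(A=a\mid\text{observed},\text{hidden})$; here the roles are played by $\pi_a^\phi$ (nominal) and $\pi_a^x$ (true), and one must check that the algebra relating the propensity odds ratio to the outcome-density reweighting factor $w$ goes through verbatim, including that $\Gamma(\phi)$ is allowed to depend on $\phi$ (this only affects the constants $s_\pm, c_\pm$, which already carry the $\phi$ argument). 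I would handle this by spelling out one line: $\mathbb{P}(X=x\mid\Phi(X)=\phi)/\mathbb{P}(X=x\mid\Phi(X)=\phi,A=a) = \pi_a^\phi(\phi)/\pi_a^x(x)$ by Bayes' rule, rearrange to isolate the odds-ratio expression, and note the resulting bound on this ratio is exactly what the sharp-bound derivation in \citep{frauen2023sharp,oprescu2023b} takes as input; everything downstream is then a citation. I would also briefly note the needed regularity (overlap $0<\pi_a^\phi(\phi)<1$, which follows from overlap on $X$, and integrability of $Y$) so that the quantile function $\mathbb{F}^{-1}(\cdot\mid a,\phi)$ and the truncated integrals are well defined.
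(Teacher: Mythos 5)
Your proposal is correct and follows essentially the same route as the paper's own proof: both reduce the claim to the sharp GMSM bounds of \citet{frauen2023sharp} (and \citet{oprescu2023b}) by using the Bayes-rule identity $\mathbb{P}(X=x\mid\Phi(X)=\phi)/\mathbb{P}(X=x\mid\Phi(X)=\phi,A=a)=\pi_a^\phi(\phi)/\pi_a^x(x)$ to show that the representation-level odds-ratio constraint is exactly the density-ratio box constraint those theorems take as input, and then cite their extremal (quantile-tilted) characterization. Your additional remarks on the bang-bang structure of the fractional linear program and on the per-arm independence of the tilts are consistent with, and slightly more explicit than, what the paper writes.
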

 \begin{proof}
     We employ the main results from \citep{frauen2023sharp}.
     
     First, it is easy to see, that our definition of the MSM, \ie, Eq.~\eqref{eq:gamma-app}, fits to the definition of the generalized marginal sensitivity model (GMSM) from \citep{frauen2023sharp}. Specifically, we can rearrange the terms in Eq.~\eqref{eq:gamma-app} in the following way (see Appendix C.1 in \citep{frauen2023sharp}):
     \begin{equation}
         \frac{1}{(1 - \Gamma) \, \pi_a^\phi(\phi) + \Gamma} \le \frac{\pi_a^x(x)}{\pi_a^\phi(\phi)} \le \frac{1}{(1 - \Gamma^{-1}) \, \pi_a^\phi(\phi) + \Gamma^{-1}} \quad \text{for all } x \in \mathcal{X} \text{ s.t. } \Phi(x) = \phi,
     \end{equation}
     where $a \in \{0, 1\}$. Then, we apply the Bayes rule to the inner ratio
     \begin{align}
         \frac{\pi_a^x(x)}{\pi_a^\phi(\phi)} &= \frac{\mathbb{P}(A = a \mid X = x, \Phi(X) = \Phi(x))}{\mathbb{P}(A = a \mid \Phi(X) = \Phi(x))} \\
         &= \frac{\mathbb{P}(X = x \mid \Phi(X) = \Phi(x), A = a) \, \pi_a^\phi(\phi) }{\mathbb{P}(X = x \mid \Phi(X) = \Phi(x)) \, \pi_a^\phi(\phi)} \\
         &= \frac{\mathbb{P}(X = x \mid \Phi(X) = \Phi(x), A = a) }{\mathbb{P}(X = x \mid \Phi(X) = \Phi(x), do(A = a))},
     \end{align}
     where the latest equality holds, as the intervention on $A$, $do(A = a)$, does not change the conditional distribution of $\mathbb{P}(X \mid \Phi(X) = \Phi(x))$. Therefore, our definition of the MSM matches the definitions of the GMSM in \citep{frauen2023sharp}.

     Second, to derive the bound on the \RICB, we refer to Theorem~1 in \citep{frauen2023sharp}, which states that the lower and upper bounds are expectations of maximally left- and right-shifted interventional distributions, respectively. The maximally left-shifted distribution, $\mathbb{P}_-(Y \mid a, \phi)$, and the maximally right-shifted distribution, $\mathbb{P}_+(Y \mid a, \phi)$, are defined in the following way:  
     \begin{align}
         \mathbb{P}_-(Y = y \mid a, \phi) &= 
         \begin{cases}
             (1 / s_-(a, \phi)) \, \mathbb{P}(Y = y \mid a, \phi), & \text{if } \mathbb{F}(y \mid a, \phi) \le c_-, \\
             (1 / s_+(a, \phi)) \, \mathbb{P}(Y = y \mid a, \phi), & \text{if } \mathbb{F}(y \mid a, \phi) > c_-,
         \end{cases} \\
         \mathbb{P}_+(Y = y \mid a, \phi) &= 
         \begin{cases}
             (1 / s_+(a, \phi)) \, \mathbb{P}(Y = y \mid a, \phi), & \text{if } \mathbb{F}(y \mid a, \phi) \le c_+, \\
             (1 / s_-(a, \phi)) \, \mathbb{P}(Y = y \mid a, \phi), & \text{if } \mathbb{F}(y \mid a, \phi) > c_+,
         \end{cases}
     \end{align}
    where $s_-(a, \phi) = ((1 - \Gamma(\phi))\pi_a^\phi(\phi) + \Gamma(\phi))^{-1}$, $s_+(a, \phi) = ((1 - \Gamma(\phi)^{-1})\pi_a^\phi(\phi) + \Gamma(\phi)^{-1})^{-1}$, $c_- = 1 / (1 + \Gamma(\phi))$, $c_+ = \Gamma(\phi) / (1 + \Gamma(\phi))$, $\mathbb{P}(Y = y \mid a, \phi) = \mathbb{P}(Y = y \mid A = a, \Phi(X) = \phi)$ is a representation-conditional density of the outcome, and $\mathbb{F}(y \mid a, \phi)$ its corresponding CDF. Then, the lower and upper bounds can be obtained by taking expectations, \ie, 
    \begin{equation}
        \underline{\mu^\phi_a}(\phi) = \int_{\mathcal{Y}} y \, \mathbb{P}_-(Y = y \mid a, \phi) \diff y \quad \text{and} \quad \overline{\mu^\phi_a}(\phi) = \int_{\mathcal{Y}} y \, \mathbb{P}_+(Y = y \mid a, \phi) \diff y.
    \end{equation}
 \end{proof}

 \begin{cor}[Validity and sharpness of bounds] \label{cor:val-sharp}
     The bounds on the \RICB in Lemma~\ref{lemma:bounds} are valid and sharp. Validity implies that the bounds always contain the ground-truth CATE wrt. representations, \ie, $\tau^\phi(\phi) \in [\underline{\tau^\phi}(\phi), \overline{\tau^\phi}(\phi)]$. Sharpness implies that all the values in the ignorance interval $[\underline{\tau^\phi}(\phi), \overline{\tau^\phi}(\phi)]$ are induced by the distributions which comply with the sensitivity constraint from Eq.~\eqref{eq:gamma}. 
 \end{cor}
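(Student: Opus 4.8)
The plan is to obtain both properties as direct consequences of the corresponding guarantees for the generalized marginal sensitivity model (GMSM) bounds of \citet{frauen2023sharp} (see also \citet{oprescu2023b}), combined with the equivalence already established in the proof of Lemma~\ref{lemma:bounds}. Recall that there we rewrote our representation-level constraint from Eq.~\eqref{eq:gamma} as a GMSM on the pair $(\Phi(X), X)$, with the representation $\Phi(X)$ in the role of the ``observed covariates'' and the residual information in $X$ in the role of the latent confounder, namely $\Gamma(\phi)^{-1} \le \mathbb{P}(X = x \mid \Phi(X)=\phi, A=a) / \mathbb{P}(X = x \mid \Phi(X)=\phi, do(A=a)) \le \Gamma(\phi)$. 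Consequently, Theorem~1 of \citet{frauen2023sharp} applies to the interventional distribution $\mathbb{P}(Y[a] \mid \Phi(X)=\phi)$ and tells us that $[\underline{\mu^\phi_a}(\phi), \overline{\mu^\phi_a}(\phi)]$ is exactly the set of means of interventional outcome distributions that marginalize to the observational $\mathbb{P}(Y \mid A=a, \Phi(X)=\phi)$ and satisfy the GMSM with parameter $\Gamma(\phi)$. Both claims of the corollary then follow per-arm and are combined.

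For validity, I would first note that in our setting there is no genuine hidden confounding: covariate-level exchangeability $A \ind (Y[0], Y[1]) \mid X$ holds, so $\mathbb{E}(Y[a] \mid \Phi(X)=\phi) = \int \mu_a^x(x)\, \mathbb{P}(X=x \mid \Phi(X)=\phi)\, \diff x$ is a genuine interventional mean, and the associated density ratio above is bounded by $\Gamma(\phi)$ by the very definition of the sensitivity parameter in Eq.~\eqref{eq:gamma}. Hence $\mathbb{E}(Y[a]\mid\phi) \in [\underline{\mu^\phi_a}(\phi), \overline{\mu^\phi_a}(\phi)]$ for $a\in\{0,1\}$; subtracting the two inclusions gives $\tau^\phi(\phi) = \mathbb{E}(Y[1]\mid\phi) - \mathbb{E}(Y[0]\mid\phi) \in [\underline{\mu^\phi_1}(\phi) - \overline{\mu^\phi_0}(\phi),\ \overline{\mu^\phi_1}(\phi) - \underline{\mu^\phi_0}(\phi)] = [\underline{\tau^\phi}(\phi), \overline{\tau^\phi}(\phi)]$, which is validity.

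For sharpness, the target is to show that every value in $[\underline{\tau^\phi}(\phi), \overline{\tau^\phi}(\phi)]$ is realized by an observational law complying with Eq.~\eqref{eq:gamma}. I would invoke the sharpness half of Theorem~1 in \citet{frauen2023sharp}: for each arm $a$ it exhibits GMSM-compatible interventional distributions attaining the endpoints $\underline{\mu^\phi_a}(\phi)$ and $\overline{\mu^\phi_a}(\phi)$, and by mixing the two extreme shifts (which remain GMSM-compatible) together with a continuity / intermediate-value argument, the attainable set of arm-$a$ means is the entire interval $[\underline{\mu^\phi_a}(\phi), \overline{\mu^\phi_a}(\phi)]$. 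The extra ingredient is that the $a=1$ shift is a function only of $\mathbb{P}(Y\mid A=1,\phi)$ and the $a=0$ shift only of $\mathbb{P}(Y\mid A=0,\phi)$, which are disjoint components of $\mathbb{P}(X,\Phi(X),A,Y)$, so the two arms' shifts can be chosen independently. Therefore the set of attainable values of $\mathbb{E}(Y[1]\mid\phi)-\mathbb{E}(Y[0]\mid\phi)$ equals $[\underline{\mu^\phi_1}(\phi) - \overline{\mu^\phi_0}(\phi),\ \overline{\mu^\phi_1}(\phi) - \underline{\mu^\phi_0}(\phi)] = [\underline{\tau^\phi}(\phi), \overline{\tau^\phi}(\phi)]$.

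The hard part will be the bookkeeping in the sharpness argument, specifically making rigorous that (i) each per-arm attainable mean set is a full interval and not merely its two endpoints, and (ii) the $a=0$ and $a=1$ shifts genuinely decouple, so that the extremal values of the difference are simultaneously realizable by a single valid joint distribution --- this is exactly where it matters that the GMSM only constrains agreement with the observational law and the ratio bound, with no additional cross-arm coupling. Everything else is a routine transfer from \citet{frauen2023sharp} together with the identification step already carried out in Lemma~\ref{lemma:bounds}.
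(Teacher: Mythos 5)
Your proposal is correct and follows essentially the same route as the paper, which simply defers both properties to the GMSM guarantees of \citet{frauen2023sharp} via the equivalence established in the proof of Lemma~\ref{lemma:bounds}. You spell out considerably more detail than the paper does (the paper's proof is a one-line citation), and your two flagged bookkeeping points---that the per-arm attainable sets are full intervals and that the two arms' shifts decouple---are exactly the steps that the cited Theorem~1 and its sharpness argument already handle.
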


 \begin{proof}
     We refer to \citep{frauen2023sharp} for a formal proof of both properties.
 \end{proof}
 }

\newpage
\section{Implementation and Hyperparameters} \label{app:baselines-hyperparameters}
\textbf{Implementation.} We implemented our refutation framework in PyTorch and Pyro. For better compatibility, the fully-connected subnetworks have one hidden layer with a tuneable number of units. For the CNF, we employed neural spline flows \citep{durkan2019neural} with a standard normal as a base distribution and noise regularization \citep{rothfuss2019noise}. The number of knots and the intensities of the noise regularization are hyperparameters. The CNF is trained via stochastic gradient descent (SGD) and all the other networks with AdamW \citep{loshchilov2019decoupled}. Each network was trained with $n_\text{iter} = 5,000$ train iterations.

\textbf{Hyperparameters.} We performed hyperparameter tuning at all the stages of our refutation framework for all the networks based on five-fold cross-validation using the training subset. At each stage, we did a random grid search with respect to different tuning criteria. Table~\ref{tab:hyperparams} provides all the details on hyperparameters tuning. For reproducibility, we made tuned hyperparameters available in our GitHub.\footnote{\url{https://github.com/Valentyn1997/RICB}.}

\begin{table*}[th]
    \caption{Hyperparameter tuning for baselines and our refutation framework.}
    \label{tab:hyperparams}
    \vspace{-0.1cm}
    \begin{center}
    \scalebox{.8}{
        \begin{tabu}{l|l|l|r}
            \toprule
            Stage & Model & Hyperparameter & Range / Value \\
            \midrule
            \multirow{28}{*}{\textbf{Stage 0}} & \multirow{8}{*}{\begin{tabular}{l}
                 TARNet \\ BNN \\ CFR \\  InvTARNet \\ BWCFR
            \end{tabular}} & Learning rate & 0.001, 0.005, 0.01\\
            && Minibatch size & 32, 64, 128 \\
            && Weight decay & 0.0, 0.001, 0.01, 0.1 \\
            && Hidden units in FC$_\phi$ (FC$_{\phi^{-1}}$) & $R \, d_x$, 1.5 $Rd_x$, 2 $Rd_x$ \\
            && Hidden units in FC$_a$ & $R \, d_\phi$, 1.5 $Rd_\phi$, 2 $Rd_\phi$ \\
            && Tuning strategy & random grid search with 50 runs \\
            && Tuning criterion & factual MSE loss \\ 
            && Optimizer & AdamW \\ 
            \cmidrule{2-4} & \multirow{11}{*}{CFR-ISW} & Representation network learning rate & 0.001, 0.005, 0.01 \\
            && Propensity network learning rate & 0.001, 0.005, 0.01 \\
            && Minibatch size & 32, 64, 128 \\
            && Representation network weight decay & 0.0, 0.001, 0.01, 0.1 \\
            && Propensity network weight decay & 0.0, 0.001, 0.01, 0.1 \\
            && Hidden units in FC$_\phi$ & $R \, d_x$, 1.5 $Rd_x$, 2 $Rd_x$ \\
            && Hidden units in FC$_a$ & $R \, d_\phi$, 1.5 $Rd_\phi$, 2 $Rd_\phi$ \\
            && Hidden units in FC$_{\pi,\phi}$ & $R \, d_\phi$, 1.5 $Rd_\phi$, 2 $Rd_\phi$ \\
            && Tuning strategy & random grid search with 100 runs \\
            && Tuning criterion & factual MSE loss + factual BCE loss \\ 
            && Optimizer & AdamW \\
            \cmidrule{2-4} & \multirow{9}{*}{RCFR} & Learning rate & 0.001, 0.005, 0.01\\
            && Minibatch size & 32, 64, 128 \\
            && Weight decay & 0.0, 0.001, 0.01, 0.1 \\
            && Hidden units in FC$_\phi$ & $R \, d_x$, 1.5 $Rd_x$, 2 $Rd_x$ \\
            && Hidden units in FC$_a$ & $R \, d_\phi$, 1.5 $Rd_\phi$, 2 $Rd_\phi$ \\
            && Hidden units in FC$_w$ & $R \, d_\phi$, 1.5 $Rd_\phi$, 2 $Rd_\phi$ \\
            && Tuning strategy & random grid search with 50 runs \\
            && Tuning criterion & factual MSE loss \\ 
            && Optimizer & AdamW \\
            \midrule \multirow{16}{*}{\textbf{Stage 1}} & \multirow{7}{*}{Propensity networks} & Learning rate & 0.001, 0.005, 0.01\\
            && Minibatch size & 32, 64, 128 \\
            && Weight decay & 0.0, 0.001, 0.01, 0.1 \\
            && Hidden units in FC$_{\pi,\phi | x}$ & $R \, d_{\phi | x}$, 1.5 $Rd_{\phi | x}$, 2 $Rd_{\phi | x}$ \\
            && Tuning strategy & random grid search with 50 runs \\
            && Tuning criterion & factual BCE loss \\ 
            && Optimizer & AdamW \\
            \cmidrule{2-4} & \multirow{9}{*}{CNF} & Learning rate & 0.001, 0.005, 0.01\\
            && Minibatch size & 32, 64, 128 \\
            && Hidden units in FC$_{\text{CNF}}$ & $R \, d_\phi$, 1.5 $Rd_\phi$, 2 $Rd_\phi$ \\
            && Number of knots & 5, 10, 20 \\
            && Intensity of outcome noise & 0.05, 0.1, 0.5 \\
            && Intensity of representation noise & 0.05, 0.1, 0.5 \\
            && Tuning strategy & random grid search with 100 runs \\
            && Tuning criterion & factual negative log-likelihood loss \\ 
            && Optimizer & SGD (momentum = 0.9) \\
            \bottomrule
            \multicolumn{4}{l}{$R = 2$ (synthetic data), $R = 1$ (IHDP100, HC-MNIST datasets)}
        \end{tabu}}
    \end{center}
\end{table*}


\newpage
\section{Dataset details} \label{app:dataset-details}
\subsection{Synthetic data}

We adopt a synthetic benchmark with hidden confounding from \citep{kallus2019interval}, but we add the confounder as the second observed covariate. Specifically, synthetic covariates, $X_1, X_2$, a treatment, $A$, and an outcome, $Y$, are sampled from the following data generating mechanism: 
\begin{equation}
    \begin{cases}
        X_1 \sim \text{Unif}(-2, 2), \\
        X_2 \sim N(0, 1),  \\
        A \sim \text{Bern}\left(\frac{1}{1 + \exp(-(0.75 \, X_1 - X_2 + 0.5))}\right) \\
        Y \sim N\big( (2\, A - 1) \, X_1 + A - 2 \, \sin(2 \, (2\,A - 1) \, X_1 + X_2) - 2\,X_2\,(1 + 0.5\,X_1) , 1\big),
    \end{cases}
\end{equation}
where $X_1, X_2$ are mutually independent.

\subsection{HC-MNIST dataset}
\citet{jesson2021quantifying} introduced a complex high-dimensional, semi-synthetic dataset based on the MNIST image dataset \citep{lecun1998mnist}, namely HC-MNIST. The MNIST dataset contains $n_{\text{train}}=60,000$ train and  $n_{\text{test}}=10,000$ test images. HC-MNIST takes original high-dimensional images and maps them onto a one-dimensional manifold, where potential outcomes depend in a complex way on the average intensity of light and the label of an image. The treatment also uses this one-dimensional summary, $\phi$, together with an additional (hidden) synthetic confounder, $U$ (we consider this hidden confounder as another observed covariate). HC-MNIST is then defined by the following data-generating mechanism:
\begin{equation}
    \begin{cases}
        U \sim \text{Bern}(0.5), \\
        X \sim \text{MNIST-image}(\cdot), \\
        \phi := \left( \operatorname{clip} \left( \frac{\mu_{N_x} - \mu_c}{\sigma_c}; - 1.4, 1.4\right) - \text{Min}_c \right) \frac{\text{Max}_c - \text{Min}_c} {1.4 - (-1.4)}, \\
        \alpha(\phi; \Gamma^*) := \frac{1}{\Gamma^* \operatorname{sigmoid}(0.75 \phi + 0.5)} + 1 - \frac{1}{\Gamma^*}, \\
        \beta(\phi; \Gamma^*) := \frac{\Gamma^*}{\operatorname{sigmoid}(0.75 \phi + 0.5)} + 1 - \Gamma^*, \\
        A \sim \text{Bern}\left( \frac{u}{\alpha(\phi; \Gamma^*)} + \frac{1 - u}{\beta(\phi; \Gamma^*)}\right), \\
        Y \sim N\big((2A - 1) \phi + (2A - 1) - 2 \sin( 2 (2A - 1) \phi ) - 2 (2U - 1)(1 + 0.5\phi), 1\big),
    \end{cases}
\end{equation}
where $c$ is a label of the digit from the sampled image $X$; $\mu_{N_x}$ is the average intensity of the sampled image; $\mu_c$ and $\sigma_c$ are the mean and standard deviation of the average intensities of the images with the label $c$; and $\text{Min}_c= -2 + \frac{4}{10}c, \text{Max}_c = -2 + \frac{4}{10}(c + 1)$. The parameter $\Gamma^*$ defines what factor influences the treatment assignment to a larger extent, i.e., the additional confounder or the one-dimensional summary. We set $\Gamma^* = \exp(1)$. For further details, we refer to \citep{jesson2021quantifying}.

\newpage
\section{Additional results} \label{app:add-results}
\subsection{Precision in estimating treatment effect}
In the following, we report results on the root precision in estimating treatment effect (rPEHE) for the baseline representation learning CATE estimators. The rPEHE is given by $\sqrt{\frac{1}{n}\sum_{i=1}^n \left( (y_i[1] - y_i[0]) - \widehat{\tau^\phi}(\Phi(x_i))\right)^2 }$, where $y_i[0], y_i[1]$ are sample from both potential outcomes. Tables~\ref{tab:synthetic-pehe}, \ref{tab:ihdp100-pehe}, and \ref{tab:hcmnist-pehe} report the results for synthetic data, IHDP100 dataset, and HC-MNIST dataset, respectively. For comparison, we also provide the performance of oracle estimators, \ie, ground-truth CATE. Notably, there is no universally best method among all of the benchmarks.

\begin{table}[h]
    \vskip -0.075in
    \caption{Results for synthetic experiments. Reported: in-sample and out-sample rPEHE; mean over 10 runs.}
    \label{tab:synthetic-pehe}
    \begin{center}
        \scriptsize
        \scalebox{0.85}{\begin{tabu}{l|cccc|cccc}
\toprule
 & \multicolumn{8}{c}{rPEHE$_{\text{in}}$} \\
\midrule 
$d_\phi$ & \multicolumn{4}{c|}{1} & \multicolumn{4}{c}{2} \\
\midrule 
$n_{\text{train}}$ & 500 & 1000 & 5000 & 10000 & 500 & 1000 & 5000 & 10000 \\
\midrule
 $k$-NN & \textbf{0.54 $\pm$ 0.03} & \textbf{0.51 $\pm$ 0.02} & \textbf{0.48 $\pm$ 0.01} & \textbf{0.47 $\pm$ 0.01} & \textbf{0.54 $\pm$ 0.03} & \textbf{0.51 $\pm$ 0.02} & \textbf{0.48 $\pm$ 0.01} & \textbf{0.47 $\pm$ 0.01} \\
 BART & \underline{0.64 $\pm$ 0.04} & 0.60 $\pm$ 0.03 & 0.51 $\pm$ 0.01 & \underline{0.48 $\pm$ 0.01} & 0.64 $\pm$ 0.04 & 0.60 $\pm$ 0.03 & 0.51 $\pm$ 0.01 & \underline{0.48 $\pm$ 0.01} \\
 \vspace{0.9mm} C-Forest & 0.67 $\pm$ 0.04 & \underline{0.59 $\pm$ 0.03} & \underline{0.49 $\pm$ 0.01} & \underline{0.48 $\pm$ 0.00} & 0.67 $\pm$ 0.04 & 0.59 $\pm$ 0.03 & \underline{0.49 $\pm$ 0.01} & \underline{0.48 $\pm$ 0.00} \\
\vspace{0.9mm}
TARNet & 0.96 $\pm$ 0.05 & 0.90 $\pm$ 0.07 & 0.90 $\pm$ 0.07 & 0.91 $\pm$ 0.05 & 0.59 $\pm$ 0.10 & 0.58 $\pm$ 0.06 & 0.55 $\pm$ 0.05 & 0.56 $\pm$ 0.04 \\
BNN (MMD; $\alpha$ = 0.1) & 0.98 $\pm$ 0.05 & 0.98 $\pm$ 0.06 & 0.97 $\pm$ 0.01 & 0.97 $\pm$ 0.01 & 0.70 $\pm$ 0.11 & 0.64 $\pm$ 0.11 & 0.63 $\pm$ 0.08 & 0.59 $\pm$ 0.09 \\
CFR (MMD; $\alpha$ = 0.1) & 0.96 $\pm$ 0.05 & 0.97 $\pm$ 0.05 & 0.93 $\pm$ 0.05 & 0.95 $\pm$ 0.01 & 0.61 $\pm$ 0.06 & 0.61 $\pm$ 0.08 & 0.61 $\pm$ 0.06 & 0.59 $\pm$ 0.09 \\
CFR (MMD; $\alpha$ = 0.5) & 0.97 $\pm$ 0.05 & 0.98 $\pm$ 0.06 & 0.97 $\pm$ 0.01 & 0.97 $\pm$ 0.01 & 0.66 $\pm$ 0.07 & 0.72 $\pm$ 0.07 & 0.66 $\pm$ 0.10 & 0.67 $\pm$ 0.07 \\
CFR (WM; $\alpha$ = 1.0) & 0.95 $\pm$ 0.04 & 0.97 $\pm$ 0.06 & 0.95 $\pm$ 0.01 & 0.95 $\pm$ 0.01 & \underline{0.56 $\pm$ 0.05} & 0.57 $\pm$ 0.06 & 0.60 $\pm$ 0.08 & 0.53 $\pm$ 0.03 \\
\vspace{0.9mm}
CFR (WM; $\alpha$ = 2.0) & 0.95 $\pm$ 0.05 & 0.96 $\pm$ 0.05 & 0.97 $\pm$ 0.04 & 0.96 $\pm$ 0.01 & 0.58 $\pm$ 0.06 & 0.61 $\pm$ 0.06 & 0.58 $\pm$ 0.08 & 0.57 $\pm$ 0.07 \\
\vspace{0.9mm}
InvTARNet & 0.90 $\pm$ 0.05 & 0.90 $\pm$ 0.07 & 0.87 $\pm$ 0.05 & 0.88 $\pm$ 0.06 & \textbf{0.54 $\pm$ 0.05} & \textbf{0.51 $\pm$ 0.03} & 0.51 $\pm$ 0.04 & 0.49 $\pm$ 0.01 \\
RCFR (WM; $\alpha$ = 1.0) & 0.97 $\pm$ 0.06 & 0.97 $\pm$ 0.05 & 0.96 $\pm$ 0.01 & 0.95 $\pm$ 0.03 & 0.57 $\pm$ 0.05 & 0.55 $\pm$ 0.03 & 0.55 $\pm$ 0.03 & 0.55 $\pm$ 0.05 \\
CFR-ISW (WM; $\alpha$ = 1.0) & 0.96 $\pm$ 0.04 & 0.97 $\pm$ 0.05 & 0.99 $\pm$ 0.10 & 0.96 $\pm$ 0.01 & \textbf{0.54 $\pm$ 0.03} & \underline{0.52 $\pm$ 0.02} & 0.53 $\pm$ 0.03 & 0.55 $\pm$ 0.02 \\
\vspace{0.9mm}
BWCFR (WM; $\alpha$ = 1.0) & 0.97 $\pm$ 0.04 & 0.96 $\pm$ 0.06 & 0.95 $\pm$ 0.03 & 0.95 $\pm$ 0.02 & 0.58 $\pm$ 0.10 & 0.54 $\pm$ 0.03 & 0.53 $\pm$ 0.02 & 0.51 $\pm$ 0.02 \\
Oracle & \multicolumn{8}{c}{0.457} \\
\bottomrule
\multicolumn{9}{l}{Lower $=$ better (best in bold, second best underlined) }
\\ \multicolumn{9}{l}{{Results for $k$-NN, BART, and C-Forest are duplicated for different $d_{\phi}$ }}
\end{tabu}
}
        \vskip 0.1in
        \scalebox{0.85}{\begin{tabu}{l|cccc|cccc}
\toprule
 & \multicolumn{8}{c}{rPEHE$_{\text{out}}$} \\
\midrule 
$d_\phi$ & \multicolumn{4}{c|}{1} & \multicolumn{4}{c}{2} \\
\midrule 
$n_{\text{train}}$ & 500 & 1000 & 5000 & 10000 & 500 & 1000 & 5000 & 10000 \\
\midrule
 $k$-NN & \textbf{0.57 $\pm$ 0.03} & \textbf{0.55 $\pm$ 0.03} & \textbf{0.51 $\pm$ 0.01} & 0.51 $\pm$ 0.02 & 0.57 $\pm$ 0.03 & 0.55 $\pm$ 0.03 & \textbf{0.51 $\pm$ 0.01} & 0.51 $\pm$ 0.02 \\
 BART & \underline{0.64 $\pm$ 0.03} & 0.62 $\pm$ 0.04 & \underline{0.52 $\pm$ 0.02} & \underline{0.50 $\pm$ 0.02} & 0.64 $\pm$ 0.03 & 0.62 $\pm$ 0.04 & \underline{0.52 $\pm$ 0.02} & \underline{0.50 $\pm$ 0.02} \\
 \vspace{0.9mm}
C-Forest & 0.68 $\pm$ 0.03 & \underline{0.61 $\pm$ 0.03} & \textbf{0.51 $\pm$ 0.02} & \textbf{0.49 $\pm$ 0.01} & 0.68 $\pm$ 0.03 & 0.61 $\pm$ 0.03 & \textbf{0.51 $\pm$ 0.02} & \textbf{0.49 $\pm$ 0.01} \\
\vspace{0.9mm}
TARNet & 0.95 $\pm$ 0.03 & 0.91 $\pm$ 0.07 & 0.89 $\pm$ 0.08 & 0.92 $\pm$ 0.05 & 0.59 $\pm$ 0.11 & 0.59 $\pm$ 0.07 & 0.55 $\pm$ 0.05 & 0.57 $\pm$ 0.04 \\
BNN (MMD; $\alpha$ = 0.1) & 0.97 $\pm$ 0.03 & 0.98 $\pm$ 0.06 & 0.98 $\pm$ 0.02 & 0.98 $\pm$ 0.03 & 0.70 $\pm$ 0.10 & 0.65 $\pm$ 0.11 & 0.64 $\pm$ 0.08 & 0.60 $\pm$ 0.09 \\
CFR (MMD; $\alpha$ = 0.1) & 0.95 $\pm$ 0.03 & 0.97 $\pm$ 0.05 & 0.93 $\pm$ 0.06 & 0.96 $\pm$ 0.02 & 0.62 $\pm$ 0.07 & 0.62 $\pm$ 0.08 & 0.62 $\pm$ 0.06 & 0.59 $\pm$ 0.08 \\
CFR (MMD; $\alpha$ = 0.5) & 0.96 $\pm$ 0.03 & 0.99 $\pm$ 0.04 & 0.97 $\pm$ 0.02 & 0.98 $\pm$ 0.02 & 0.66 $\pm$ 0.08 & 0.72 $\pm$ 0.07 & 0.67 $\pm$ 0.09 & 0.67 $\pm$ 0.08 \\
CFR (WM; $\alpha$ = 1.0) & 0.95 $\pm$ 0.03 & 0.97 $\pm$ 0.05 & 0.95 $\pm$ 0.02 & 0.95 $\pm$ 0.02 & 0.56 $\pm$ 0.05 & 0.57 $\pm$ 0.06 & 0.61 $\pm$ 0.08 & 0.54 $\pm$ 0.04 \\
\vspace{0.9mm}
CFR (WM; $\alpha$ = 2.0) & 0.95 $\pm$ 0.03 & 0.97 $\pm$ 0.05 & 0.97 $\pm$ 0.05 & 0.97 $\pm$ 0.02 & 0.58 $\pm$ 0.07 & 0.61 $\pm$ 0.06 & 0.58 $\pm$ 0.09 & 0.57 $\pm$ 0.07 \\
\vspace{0.9mm}
InvTARNet & 0.89 $\pm$ 0.04 & 0.90 $\pm$ 0.06 & 0.87 $\pm$ 0.05 & 0.89 $\pm$ 0.06 & \textbf{0.53 $\pm$ 0.04} & \textbf{0.52 $\pm$ 0.03} & \textbf{0.51 $\pm$ 0.03} & \underline{0.50 $\pm$ 0.02} \\
RCFR (WM; $\alpha$ = 1.0) & 0.96 $\pm$ 0.04 & 0.98 $\pm$ 0.07 & 0.96 $\pm$ 0.03 & 0.96 $\pm$ 0.03 & 0.57 $\pm$ 0.04 & 0.55 $\pm$ 0.04 & 0.56 $\pm$ 0.03 & 0.56 $\pm$ 0.04 \\
CFR-ISW (WM; $\alpha$ = 1.0) & 0.96 $\pm$ 0.03 & 0.97 $\pm$ 0.05 & 1.00 $\pm$ 0.10 & 0.97 $\pm$ 0.02 & \underline{0.54 $\pm$ 0.04} & \underline{0.53 $\pm$ 0.02} & 0.53 $\pm$ 0.03 & 0.55 $\pm$ 0.02 \\
\vspace{0.9mm}
BWCFR (WM; $\alpha$ = 1.0) & 0.96 $\pm$ 0.04 & 0.96 $\pm$ 0.05 & 0.96 $\pm$ 0.04 & 0.95 $\pm$ 0.03 & 0.58 $\pm$ 0.10 & 0.54 $\pm$ 0.04 & 0.53 $\pm$ 0.03 & 0.52 $\pm$ 0.03 \\
Oracle & \multicolumn{8}{c}{0.457} \\
\bottomrule
\multicolumn{9}{l}{Lower $=$ better (best in bold, second best underlined) }
\\ \multicolumn{9}{l}{{Results for $k$-NN, BART, and C-Forest are duplicated for different $d_{\phi}$ }}
\end{tabu}
}
    \end{center}
\end{table}

\begin{table}[h]
    \vskip -0.075in
    \caption{Results for IHDP100 experiments. Reported: in-sample and out-sample rPEHE; mean over 100 train/test splits.}
    \label{tab:ihdp100-pehe}
    \begin{center}
        \scriptsize
        \scalebox{0.97}{\begin{tabu}{l|cccccc}
\toprule
 & \multicolumn{5}{c}{rPEHE$_{\text{in}}$} \\
\midrule 
$d_\phi$ & 5 & 10 & 15 & 20 & 25 \\
\midrule
 $k$-NN & 0.668 $\pm$ 0.064 & 0.668 $\pm$ 0.064 & 0.668 $\pm$ 0.064 & 0.668 $\pm$ 0.064 & 0.668 $\pm$ 0.064 \\
 BART & 0.594 $\pm$ 0.124 & 0.594 $\pm$ 0.124 & 0.594 $\pm$ 0.124 & 0.594 $\pm$ 0.124 & 0.594 $\pm$ 0.124 \\
 \vspace{0.9mm}
C-Forest & 0.641 $\pm$ 0.066 & 0.641 $\pm$ 0.066 & 0.641 $\pm$ 0.066 & 0.641 $\pm$ 0.066 & 0.641 $\pm$ 0.066 \\
\vspace{0.9mm}
TARNet & 0.514 $\pm$ 0.245 & 0.503 $\pm$ 0.235 & 0.502 $\pm$ 0.236 & 0.505 $\pm$ 0.232 & 0.503 $\pm$ 0.234 \\
BNN (MMD; $\alpha$ = 0.1) & 0.495 $\pm$ 0.172 & 0.490 $\pm$ 0.175 & 0.500 $\pm$ 0.173 & 0.498 $\pm$ 0.172 & \underline{0.469 $\pm$ 0.184} \\
CFR (MMD; $\alpha$ = 0.1) & \underline{0.463 $\pm$ 0.186} & 0.470 $\pm$ 0.191 & 0.477 $\pm$ 0.181 & \underline{0.469 $\pm$ 0.188} & 0.476 $\pm$ 0.183 \\
CFR (MMD; $\alpha$ = 0.5) & \textbf{0.462 $\pm$ 0.185} & 0.489 $\pm$ 0.190 & 0.503 $\pm$ 0.185 & 0.492 $\pm$ 0.176 & 0.500 $\pm$ 0.185 \\
CFR (WM; $\alpha$ = 1.0) & \underline{0.463 $\pm$ 0.190} & \textbf{0.465 $\pm$ 0.195} & \underline{0.467 $\pm$ 0.190} & \textbf{0.466 $\pm$ 0.192} & \textbf{0.466 $\pm$ 0.194} \\
\vspace{0.9mm}
CFR (WM; $\alpha$ = 2.0) & 0.471 $\pm$ 0.183 & 0.473 $\pm$ 0.191 & 0.475 $\pm$ 0.189 & 0.471 $\pm$ 0.191 & 0.471 $\pm$ 0.188 \\
\vspace{0.9mm}
InvTARNet & 0.505 $\pm$ 0.258 & 0.537 $\pm$ 0.310 & 0.523 $\pm$ 0.240 & 0.514 $\pm$ 0.235 & 0.505 $\pm$ 0.226 \\
RCFR (WM; $\alpha$ = 1.0) & 0.560 $\pm$ 0.154 & 0.572 $\pm$ 0.147 & 0.499 $\pm$ 0.233 & 0.653 $\pm$ 0.253 & 0.506 $\pm$ 0.205 \\
CFR-ISW (WM; $\alpha$ = 1.0) & 0.466 $\pm$ 0.183 & \underline{0.469 $\pm$ 0.189} & \textbf{0.464 $\pm$ 0.195} & \textbf{0.466 $\pm$ 0.195} & 0.471 $\pm$ 0.199 \\
\vspace{0.9mm}
BWCFR (WM; $\alpha$ = 1.0) & 0.581 $\pm$ 0.163 & 0.600 $\pm$ 0.166 & 0.599 $\pm$ 0.175 & 0.595 $\pm$ 0.204 & 0.789 $\pm$ 1.721 \\
Oracle & \multicolumn{5}{c}{0.425} \\
\bottomrule
\multicolumn{6}{l}{Lower $=$ better (best in bold, second best underlined) }
\\ \multicolumn{6}{l}{{Results for $k$-NN, BART, and C-Forest are duplicated for different $d_{\phi}$ }}
\end{tabu}
}
        \vskip 0.1in
        \scalebox{0.97}{\begin{tabu}{l|cccccc}
\toprule
 & \multicolumn{5}{c}{rPEHE$_{\text{out}}$} \\
\midrule 
$d_\phi$ & 5 & 10 & 15 & 20 & 25 \\
\midrule
 $k$-NN & 0.756 $\pm$ 0.123 & 0.756 $\pm$ 0.123 & 0.756 $\pm$ 0.123 & 0.756 $\pm$ 0.123 & 0.756 $\pm$ 0.123 \\
 BART & 0.615 $\pm$ 0.133 & 0.615 $\pm$ 0.133 & 0.615 $\pm$ 0.133 & 0.615 $\pm$ 0.133 & 0.615 $\pm$ 0.133 \\
 \vspace{0.9mm}
C-Forest & 0.667 $\pm$ 0.123 & 0.667 $\pm$ 0.123 & 0.667 $\pm$ 0.123 & 0.667 $\pm$ 0.123 & 0.667 $\pm$ 0.123 \\
\vspace{0.9mm}
TARNet & 0.620 $\pm$ 0.292 & 0.622 $\pm$ 0.288 & 0.620 $\pm$ 0.291 & 0.613 $\pm$ 0.285 & 0.613 $\pm$ 0.285 \\
BNN (MMD; $\alpha$ = 0.1) & 0.514 $\pm$ 0.181 & 0.510 $\pm$ 0.185 & 0.517 $\pm$ 0.183 & 0.515 $\pm$ 0.181 & \underline{0.496 $\pm$ 0.195} \\
CFR (MMD; $\alpha$ = 0.1) & \textbf{0.492 $\pm$ 0.193} & \underline{0.497 $\pm$ 0.201} & 0.500 $\pm$ 0.186 & \textbf{0.493 $\pm$ 0.191} & 0.500 $\pm$ 0.190 \\
CFR (MMD; $\alpha$ = 0.5) & \underline{0.493 $\pm$ 0.191} & 0.502 $\pm$ 0.191 & 0.515 $\pm$ 0.191 & 0.509 $\pm$ 0.185 & 0.511 $\pm$ 0.187 \\
CFR (WM; $\alpha$ = 1.0) & 0.499 $\pm$ 0.202 & 0.500 $\pm$ 0.211 & \textbf{0.495 $\pm$ 0.203} & \underline{0.497 $\pm$ 0.204} & 0.498 $\pm$ 0.204 \\
\vspace{0.9mm}
CFR (WM; $\alpha$ = 2.0) & 0.500 $\pm$ 0.198 & \textbf{0.496 $\pm$ 0.201} & 0.503 $\pm$ 0.202 & 0.499 $\pm$ 0.203 & 0.500 $\pm$ 0.196 \\
\vspace{0.9mm}
InvTARNet & 0.563 $\pm$ 0.269 & 0.610 $\pm$ 0.350 & 0.618 $\pm$ 0.287 & 0.605 $\pm$ 0.277 & 0.595 $\pm$ 0.267 \\
RCFR (WM; $\alpha$ = 1.0) & 0.601 $\pm$ 0.202 & 0.612 $\pm$ 0.198 & 0.535 $\pm$ 0.228 & 0.676 $\pm$ 0.281 & 0.580 $\pm$ 0.256 \\
CFR-ISW (WM; $\alpha$ = 1.0) & 0.501 $\pm$ 0.195 & 0.501 $\pm$ 0.203 & \underline{0.496 $\pm$ 0.207} & \underline{0.497 $\pm$ 0.208} & \textbf{0.495 $\pm$ 0.210} \\
\vspace{0.9mm}
BWCFR (WM; $\alpha$ = 1.0) & 0.599 $\pm$ 0.182 & 0.618 $\pm$ 0.190 & 0.604 $\pm$ 0.184 & 0.601 $\pm$ 0.190 & 0.731 $\pm$ 0.988 \\
Oracle & \multicolumn{5}{c}{0.424} \\
\bottomrule
\multicolumn{6}{l}{Lower $=$ better (best in bold, second best underlined) }
\\ \multicolumn{6}{l}{{Results for $k$-NN, BART, and C-Forest are duplicated for different $d_{\phi}$}}
\end{tabu}
}
    \end{center}
\end{table}

\begin{table}[h]
    \vskip -0.075in
    \caption{Results for HC-MNIST experiments. Reported: in-sample and out-sample rPEHE; mean over 10 runs.}
    \label{tab:hcmnist-pehe}
    \begin{center}
        \scriptsize
        \scalebox{0.97}{\begin{tabu}{l|ccc|ccc}
\toprule
 & \multicolumn{3}{c|}{rPEHE$_{\text{in}}$} & \multicolumn{3}{c}{rPEHE$_{\text{out}}$} \\
\midrule 
 $d_\phi$ & 7 & 39 & 78 & 7 & 39 & 78 \\
\midrule
 $k$-NN & 1.06 $\pm$ 0.01 & 1.06 $\pm$ 0.01 & 1.06 $\pm$ 0.01 & 1.10 $\pm$ 0.01 & 1.10 $\pm$ 0.01 & 1.10 $\pm$ 0.01 \\
 BART & 0.89 $\pm$ 0.24 & 0.89 $\pm$ 0.24 & 0.89 $\pm$ 0.24 & 0.89 $\pm$ 0.25 & 0.89 $\pm$ 0.25 & 0.89 $\pm$ 0.25 \\
 \vspace{0.9mm}
C-Forest & 0.82 $\pm$ 0.00 & 0.82 $\pm$ 0.00 & 0.82 $\pm$ 0.00 & 0.82 $\pm$ 0.00 & 0.82 $\pm$ 0.00 & 0.82 $\pm$ 0.00 \\
\vspace{0.9mm}
TARNet & \textbf{0.71 $\pm$ 0.02} & \textbf{0.66 $\pm$ 0.01} & \textbf{0.67 $\pm$ 0.03} & \textbf{0.72 $\pm$ 0.02} & \textbf{0.70 $\pm$ 0.01} & \textbf{0.71 $\pm$ 0.02} \\
BNN (MMD; $\alpha$ = 0.1) & 0.75 $\pm$ 0.03 & 0.73 $\pm$ 0.03 & 0.93 $\pm$ 0.14 & 0.77 $\pm$ 0.03 & 0.74 $\pm$ 0.03 & 0.93 $\pm$ 0.14 \\
CFR (MMD; $\alpha$ = 0.1) & \underline{0.73 $\pm$ 0.03} & \underline{0.67 $\pm$ 0.01} & \underline{0.69 $\pm$ 0.03} & \underline{0.74 $\pm$ 0.02} & \underline{0.71 $\pm$ 0.01} & \underline{0.73 $\pm$ 0.02} \\
CFR (MMD; $\alpha$ = 0.5) & 0.86 $\pm$ 0.23 & 0.74 $\pm$ 0.02 & 0.80 $\pm$ 0.15 & 0.86 $\pm$ 0.13 & 0.76 $\pm$ 0.02 & 0.78 $\pm$ 0.05 \\
CFR (WM; $\alpha$ = 1.0) & 1.10 $\pm$ 0.17 & 1.21 $\pm$ 0.14 & 1.37 $\pm$ 0.23 & 1.10 $\pm$ 0.17 & 1.19 $\pm$ 0.14 & 1.28 $\pm$ 0.14 \\
\vspace{0.9mm}
CFR (WM; $\alpha$ = 2.0) & 1.40 $\pm$ 0.31 & 1.33 $\pm$ 0.09 & 1.30 $\pm$ 0.05 & 1.28 $\pm$ 0.10 & 1.30 $\pm$ 0.06 & 1.29 $\pm$ 0.06 \\
\vspace{0.9mm}
InvTARNet & 0.77 $\pm$ 0.05 & 0.69 $\pm$ 0.04 & \underline{0.69 $\pm$ 0.05} & 0.76 $\pm$ 0.03 & \underline{0.71 $\pm$ 0.01} & \textbf{0.71 $\pm$ 0.01} \\
RCFR (WM; $\alpha$ = 1.0) & 1.83 $\pm$ 1.48 & 2.78 $\pm$ 4.05 & 3.68 $\pm$ 3.43 & 1.24 $\pm$ 0.29 & 2.66 $\pm$ 4.33 & 3.05 $\pm$ 3.72 \\
CFR-ISW (WM; $\alpha$ = 1.0) & 1.29 $\pm$ 0.12 & 1.25 $\pm$ 0.04 & 1.28 $\pm$ 0.06 & 1.25 $\pm$ 0.05 & 1.26 $\pm$ 0.08 & 1.27 $\pm$ 0.07 \\
\vspace{0.9mm}
BWCFR (WM; $\alpha$ = 1.0) & 2.92 $\pm$ 2.39 & 1.31 $\pm$ 0.05 & 1.35 $\pm$ 0.05 & 1.46 $\pm$ 0.45 & 1.32 $\pm$ 0.06 & 1.35 $\pm$ 0.05 \\
Oracle & \multicolumn{3}{c|}{0.522} & \multicolumn{3}{c}{0.513} \\
\bottomrule
\multicolumn{7}{l}{Lower $=$ better (best in bold, second best underlined) }
\\ \multicolumn{7}{l}{{Results for $k$-NN, BART, and C-Forest are duplicated for different $d_{\phi}$ }}
\end{tabu}
}
    \end{center}
\end{table}

\newpage
\subsection{Policy error rates and deferral rates}
Here, we examine the trade-off between the improvement in the error rates and the increase in the deferral rates, which appear after the application of our refutation framework. One way to do so is through \emph{policy error rate vs. deferral rate} plots, as in \citep{jesson2021quantifying}. In Figures~\ref{fig:synthetic-er-dr}, \ref{fig:ihdp-er-dr}, and \ref{fig:hcmnist-er-dr}, we report the error rates and the deferral rates for synthetic data, IHDP100 dataset, and HC-MNIST dataset, respectively. Therein, the baseline representation learning methods for CATE are shown as points located at the vertical line with a deferral rate of zero. Each of those points is then connected with a point corresponding to our bounds performance. As a result, we see that our refutation framework achieves an improvement in the error rates with only a marginal increase in the deferral rates.   

\begin{figure}[h]
    \vspace{-0.2cm}
    \centering
    \includegraphics[width=\textwidth]{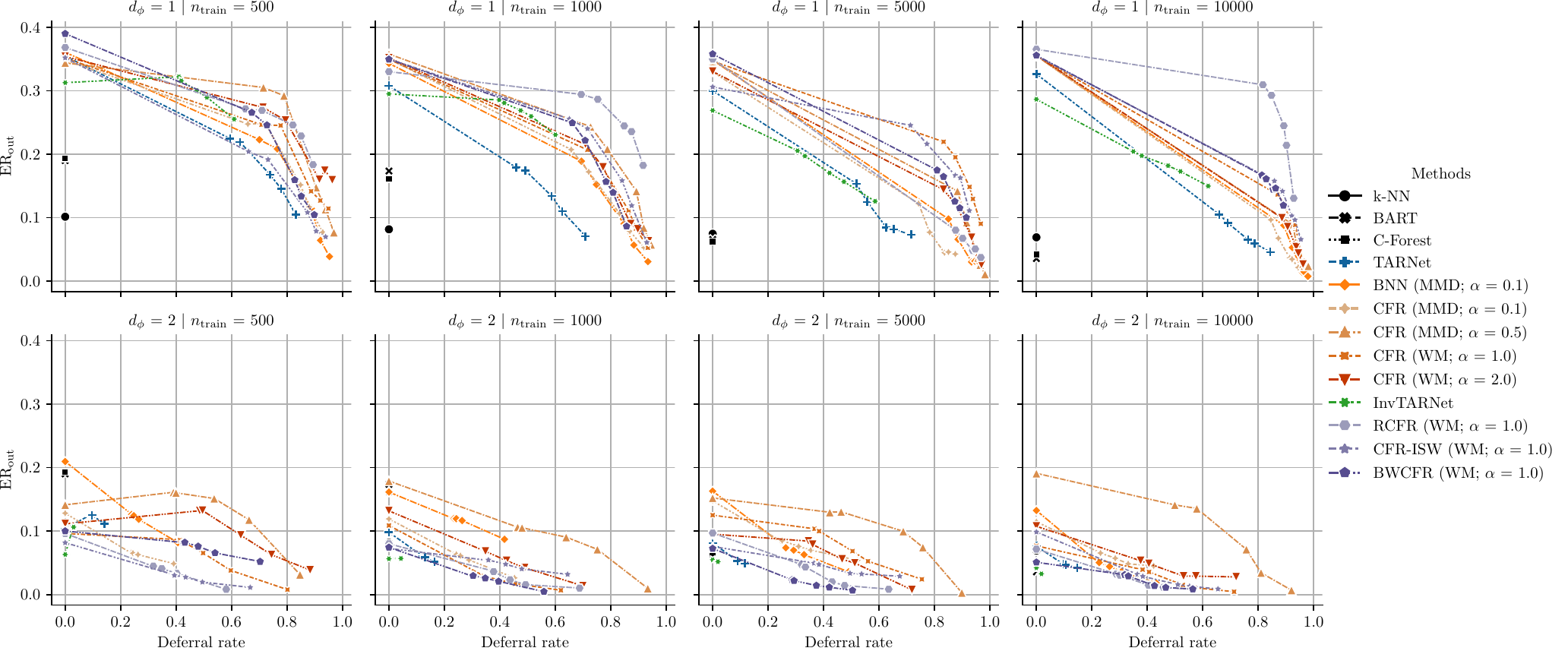}
    \vspace{-0.2cm}
    \caption{Policy error rate vs. deferral rate plot for synthetic data. Reported: out-sample performance of baseline methods connected with the performance of our refutation framework; mean over 10 runs. Here, $\delta$ varies in the range $\{0.0005, 0.001, 0.005, 0.01, 0.05\}$, which corresponds to several scatter points per line.} \label{fig:synthetic-er-dr}
    \vspace{-0.25cm}
\end{figure}

\begin{figure}[h]
    \vspace{-0.3cm}
    \centering
    \includegraphics[width=\textwidth]{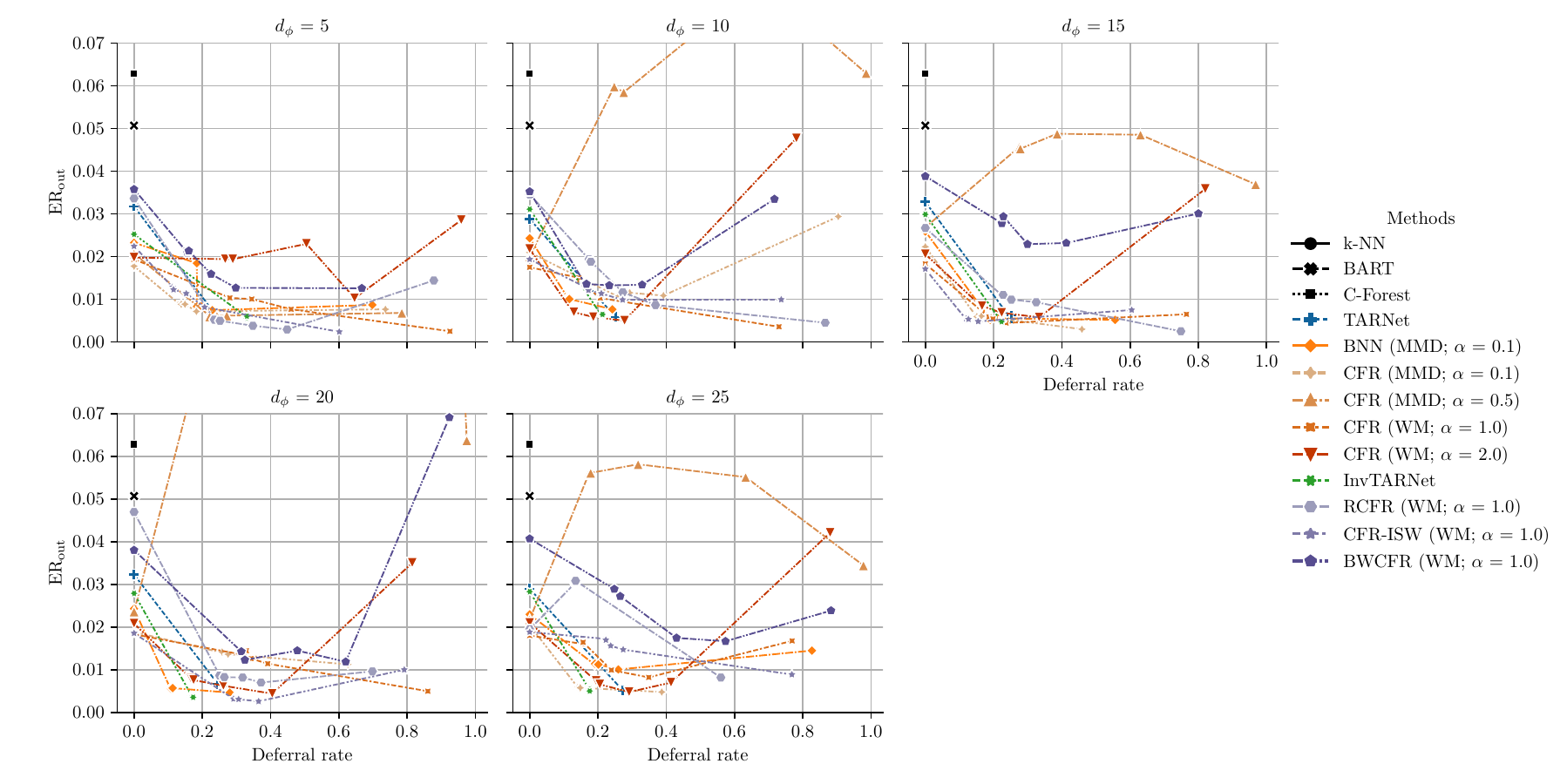}
    \vspace{-0.6cm}
    \caption{Policy error rate vs. deferral rate plot for IHDP100 dataset. Reported: out-sample performance of baseline methods connected with the performance of our refutation framework; mean over 100 train/test splits. Here, $\delta$ varies in the range $\{0.0005, 0.001, 0.005, 0.01, 0.05\}$, which corresponds to several scatter points per line.} \label{fig:ihdp-er-dr}
    \vspace{-0.25cm}
\end{figure}

\begin{figure}[h]
    \vspace{-0.3cm}
    \centering
    \includegraphics[width=0.9\textwidth]{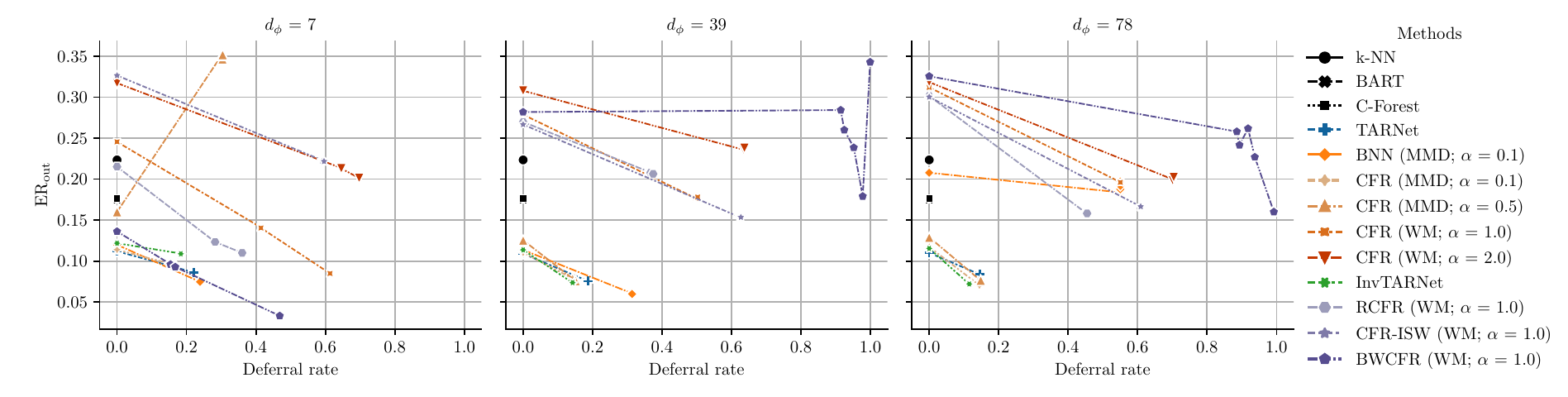}
    \vspace{-0.2cm}
    \caption{Policy error rate vs. deferral rate plot for HC-MNIST dataset. Reported: out-sample performance of baseline methods connected with the performance of our refutation framework; mean over 10 runs. Here, $\delta$ varies in the range $\{0.0005, 0.001, 0.005, 0.01, 0.05\}$, which corresponds to several scatter points per line.} \label{fig:hcmnist-er-dr}
    \vspace{-0.25cm}
\end{figure}

\newpage
\subsection{Decision boundaries}

{In Fig.~\ref{fig:synth-des-boundaries} and \ref{fig:synth-des-boundaries-app}, we provide plots of the decision boundaries for all the baselines (using the synthetic benchmark). Therein, we plot both the ground-truth boundaries and boundaries, estimated with our refutation framework in combination with different representation learning methods for CATE. On the left side ($d_\phi = 1$), the low-dimensional representation induces at the same time loss of heterogeneity and confounding bias. We see that our bounds remain approximately valid for CATE wrt. representations, even when the dimensionality of the representation is misspecified, especially for different variants of CFR and InvTARNet. On the right side ($d_\phi = 2$), there is no loss of heterogeneity but still some hidden confounding induced. Yet, our bounds are valid for both CATE wrt. representations and covariates for all the baselines. Additionally, we observe that the tightness of the estimated bounds highly depends on the representation learning baseline, \eg, the bounds on the \RICB are the tightest for InvTARNet, as its representations introduce almost no confounding bias.}

\begin{figure}[h]
    \centering
    \hspace{-2cm}
    \includegraphics[width=0.39\textwidth]{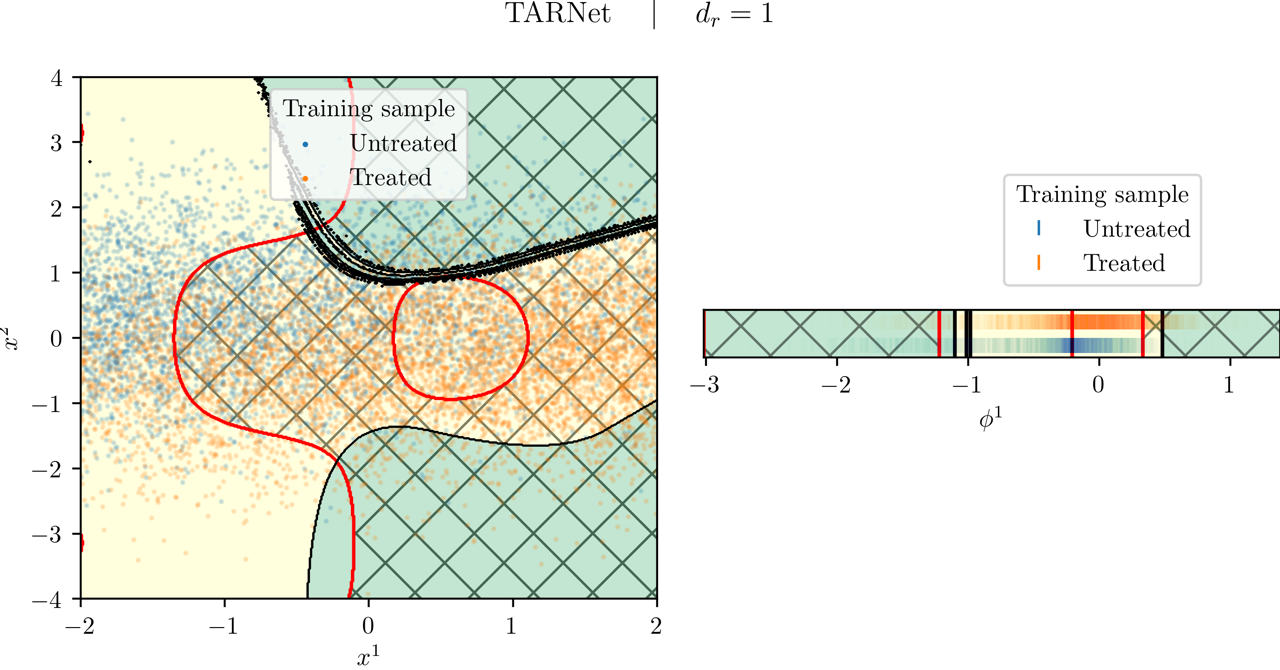} 
    \vrule \hspace{0.1cm}
    \includegraphics[width=0.6\textwidth]{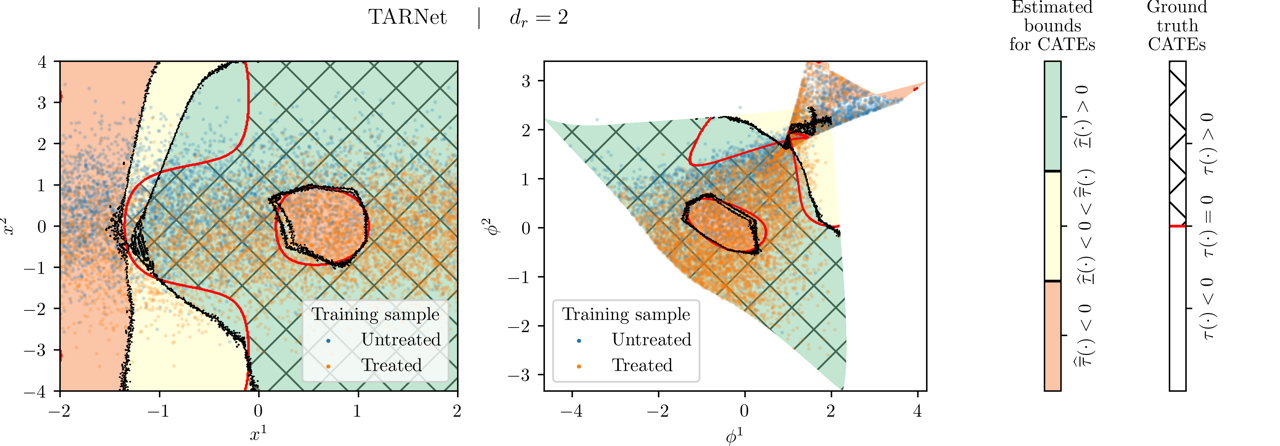}
    \hspace{-2.1cm} \vrule
    \vspace{-0.2cm}
    \caption{ {Decision boundaries for the synthetic dataset. Ground-truth and estimated boundaries are shown with hatches and colors, respectively. Training samples ($n_{\text{train}} = 10,000$) are shown too to highlight treatment imbalance. Here, $\delta = 0.01$.}} \label{fig:synth-des-boundaries}
    \vspace{-0.25cm}
\end{figure}

\begin{figure}[h]
    \vspace{-0.3cm}
    \hspace{-0.2cm}
    \includegraphics[width=0.39\textwidth]{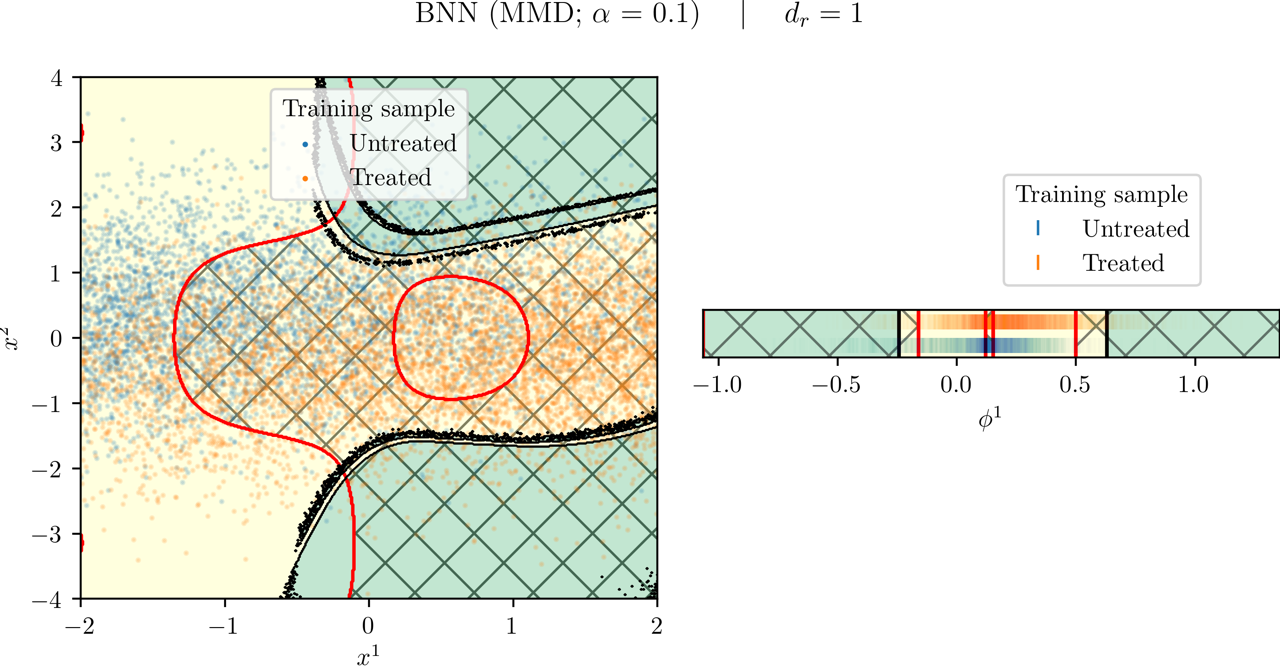} 
    \vrule \hspace{0.1cm} 
    \includegraphics[width=0.6\textwidth]{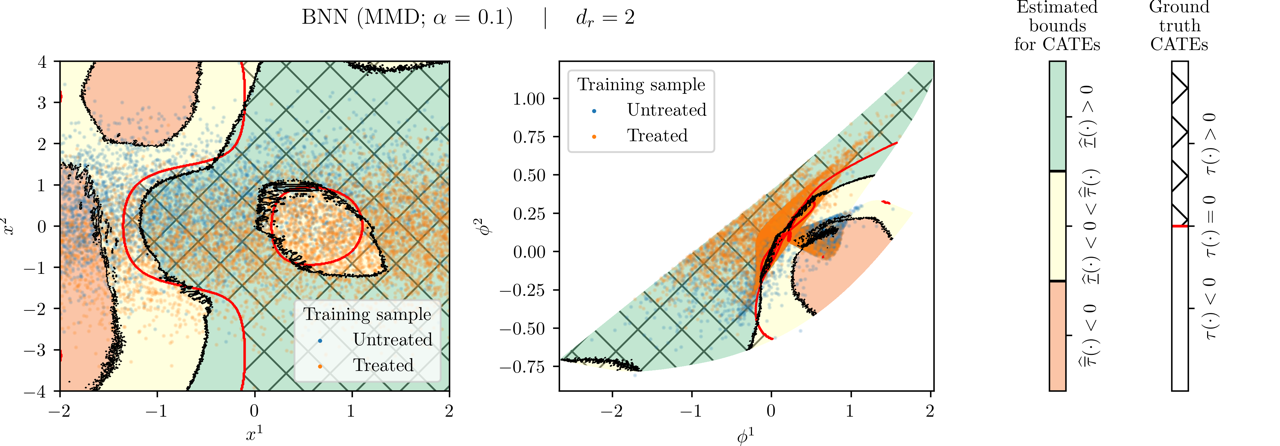} 
    \hspace{-2.1cm} \vrule
    \vspace{0.1cm} \hrule \vspace{0.1cm}
    \hspace{-0.2cm} \includegraphics[width=0.39\textwidth]{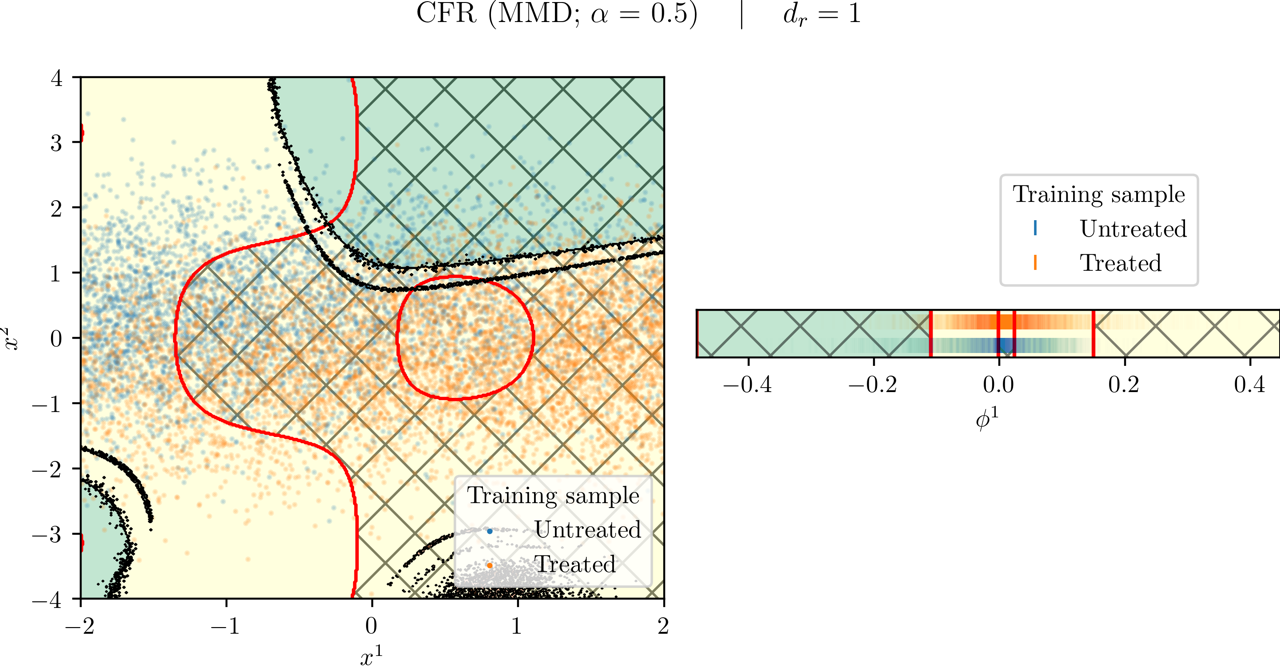} 
    \vrule \hspace{0.1cm} 
    \includegraphics[width=0.6\textwidth]{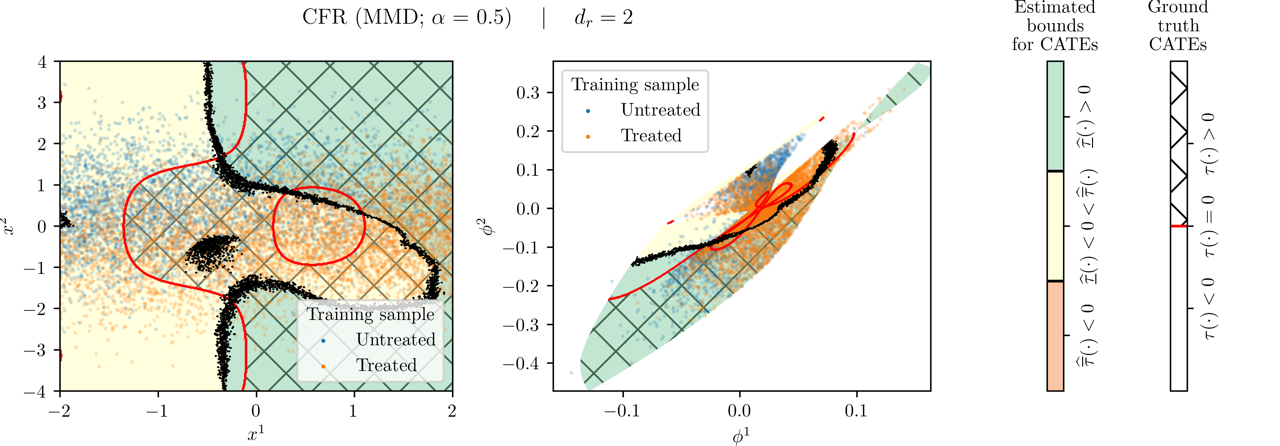} 
    \hspace{-2.1cm} \vrule
    \vspace{0.1cm} \hrule \vspace{0.1cm}
    \hspace{-0.2cm} \includegraphics[width=0.39\textwidth]{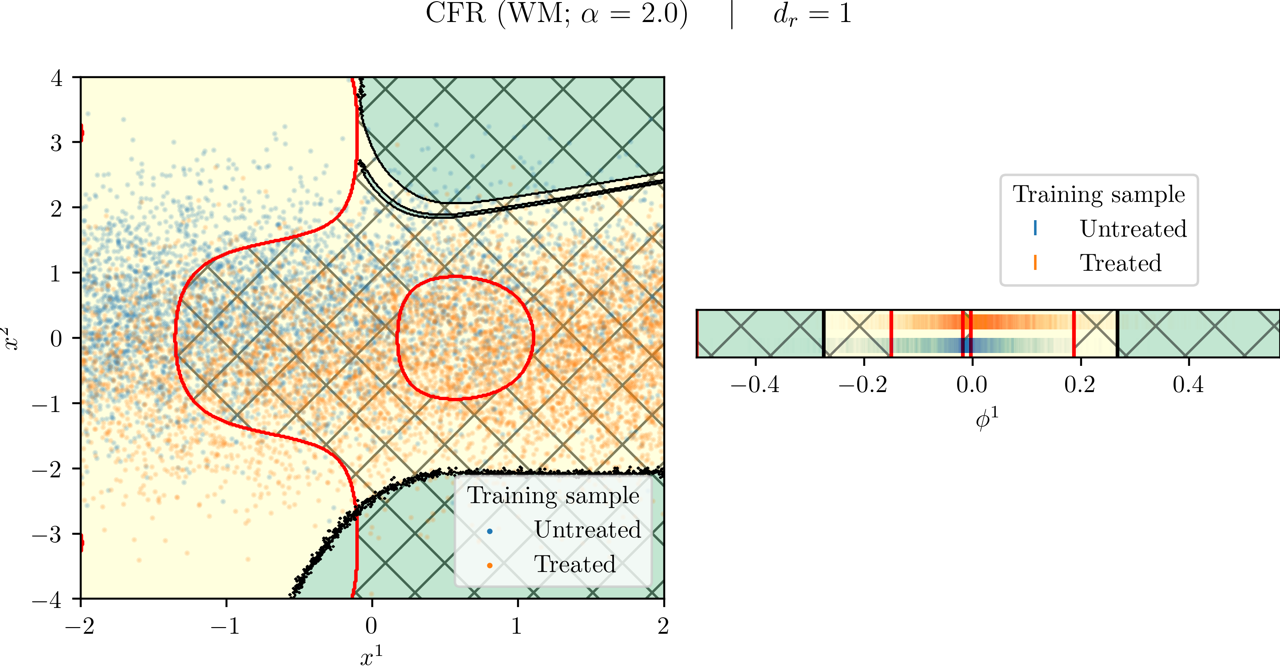} 
    \vrule \hspace{0.1cm} 
    \includegraphics[width=0.6\textwidth]{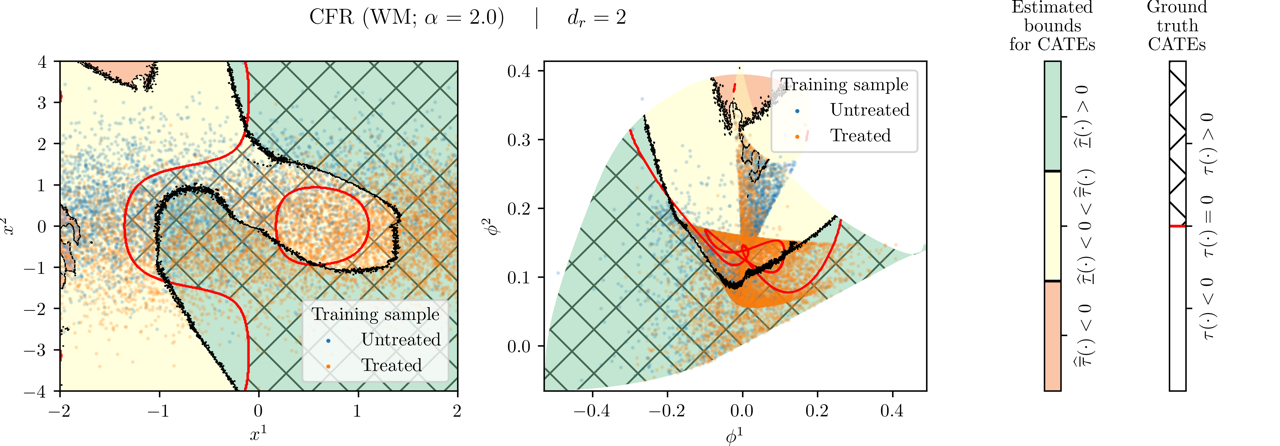} 
    \hspace{-2.1cm} \vrule
    \vspace{0.1cm} \hrule \vspace{0.1cm}
    \hspace{-0.2cm} \includegraphics[width=0.39\textwidth]{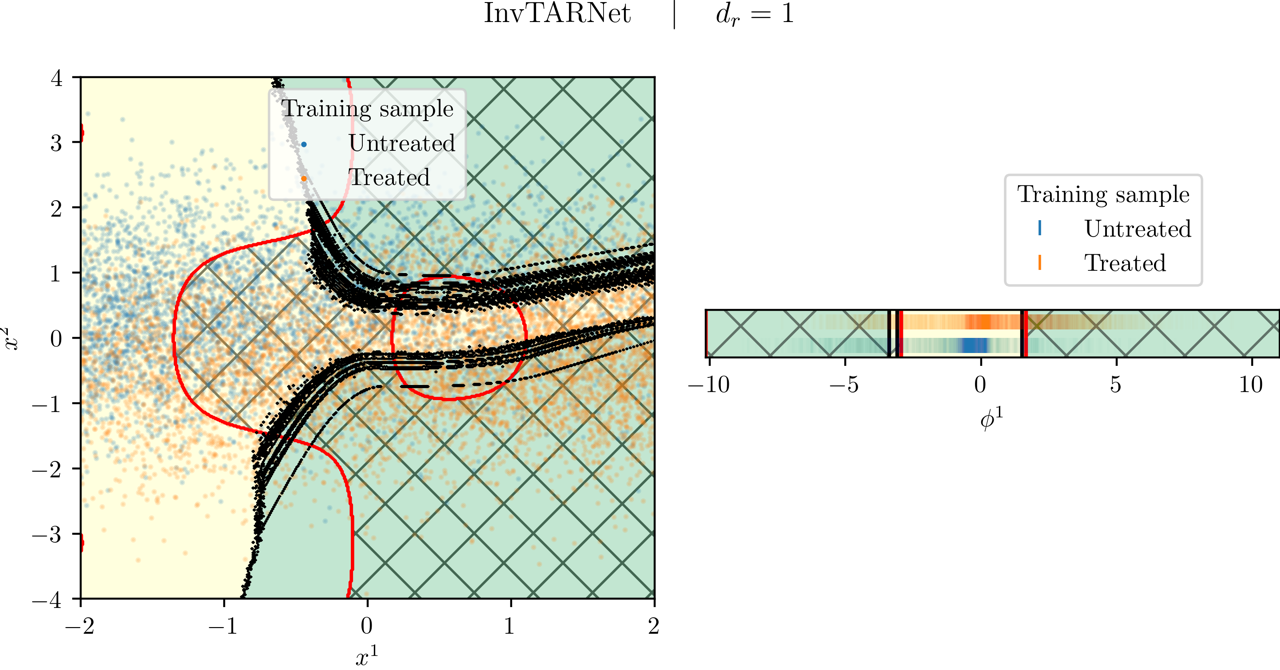} 
    \vrule \hspace{0.1cm} 
    \includegraphics[width=0.6\textwidth]{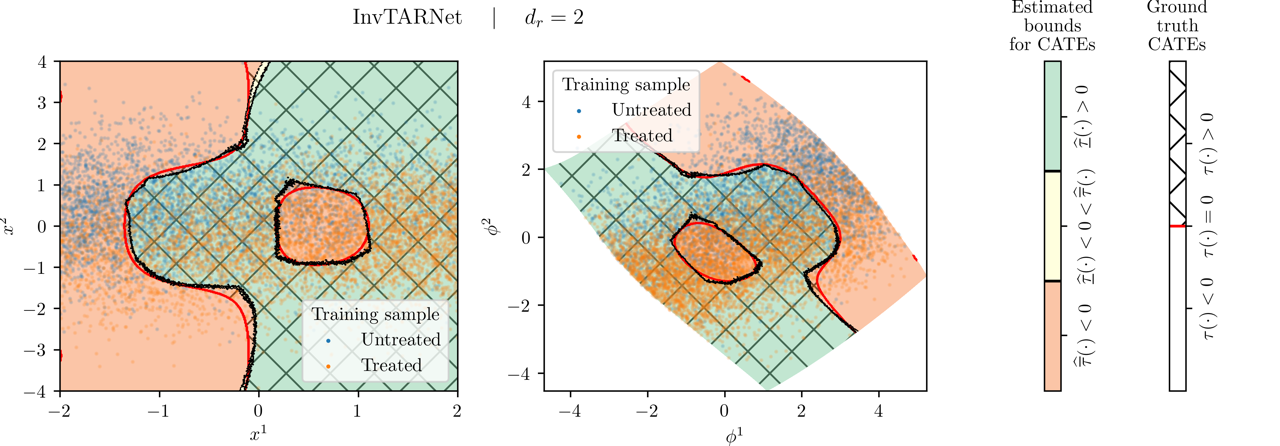} 
    \hspace{-2.1cm} \vrule
    \vspace{0.1cm} \hrule \vspace{0.1cm}
    \hspace{-0.2cm} \includegraphics[width=0.39\textwidth]{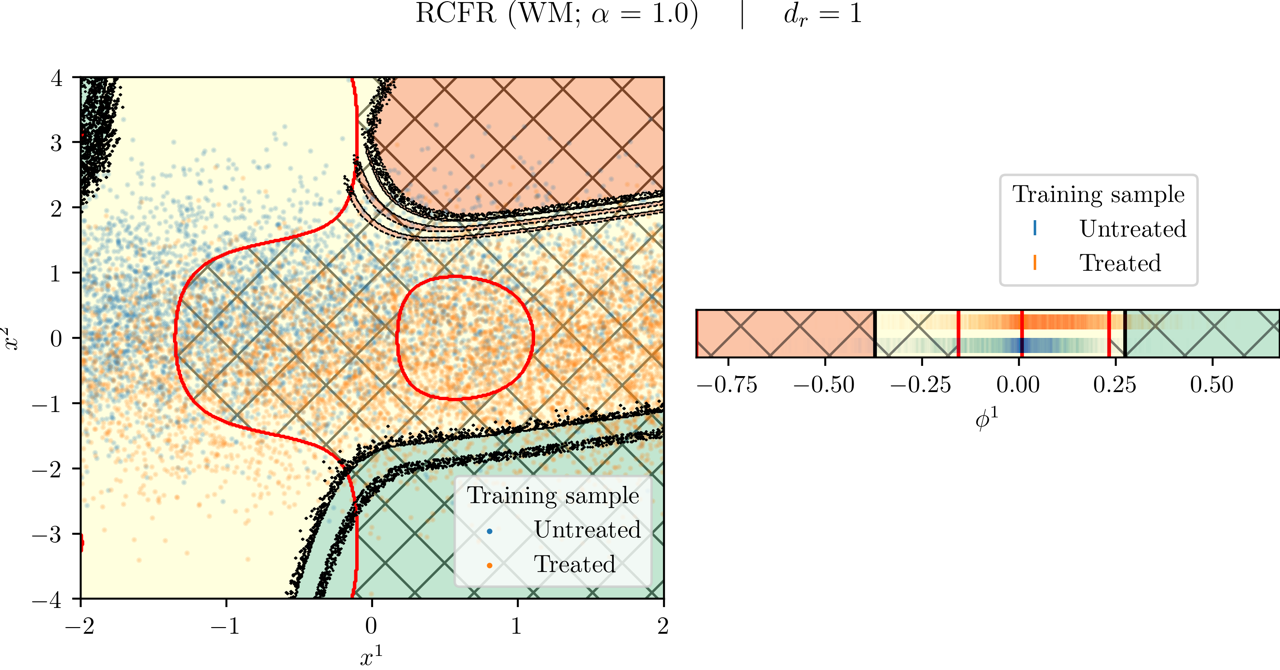} 
    \vrule \hspace{0.1cm} 
    \includegraphics[width=0.6\textwidth]{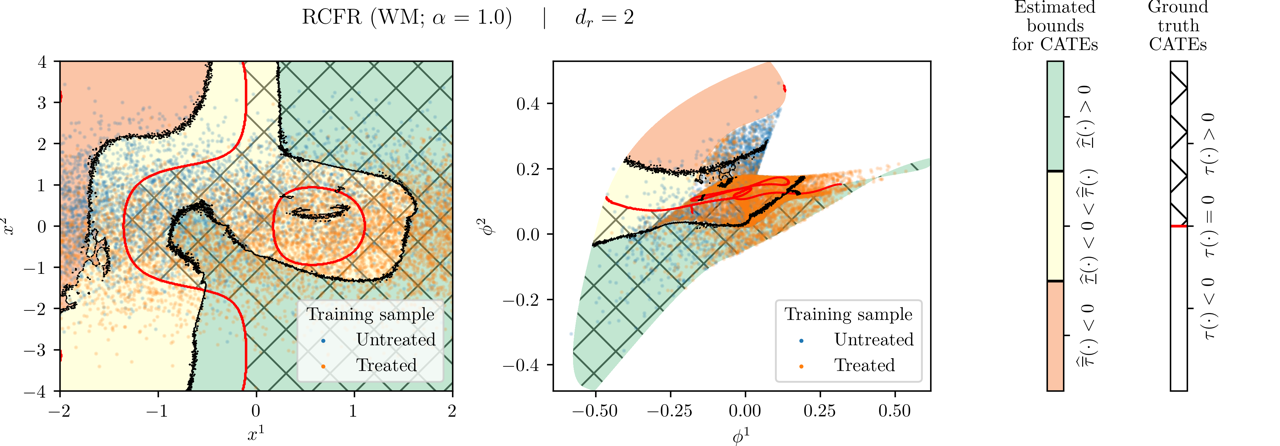} 
    \hspace{-2.1cm} \vrule
    \vspace{0.1cm} \hrule \vspace{0.1cm}
    \hspace{-0.2cm} \includegraphics[width=0.39\textwidth]{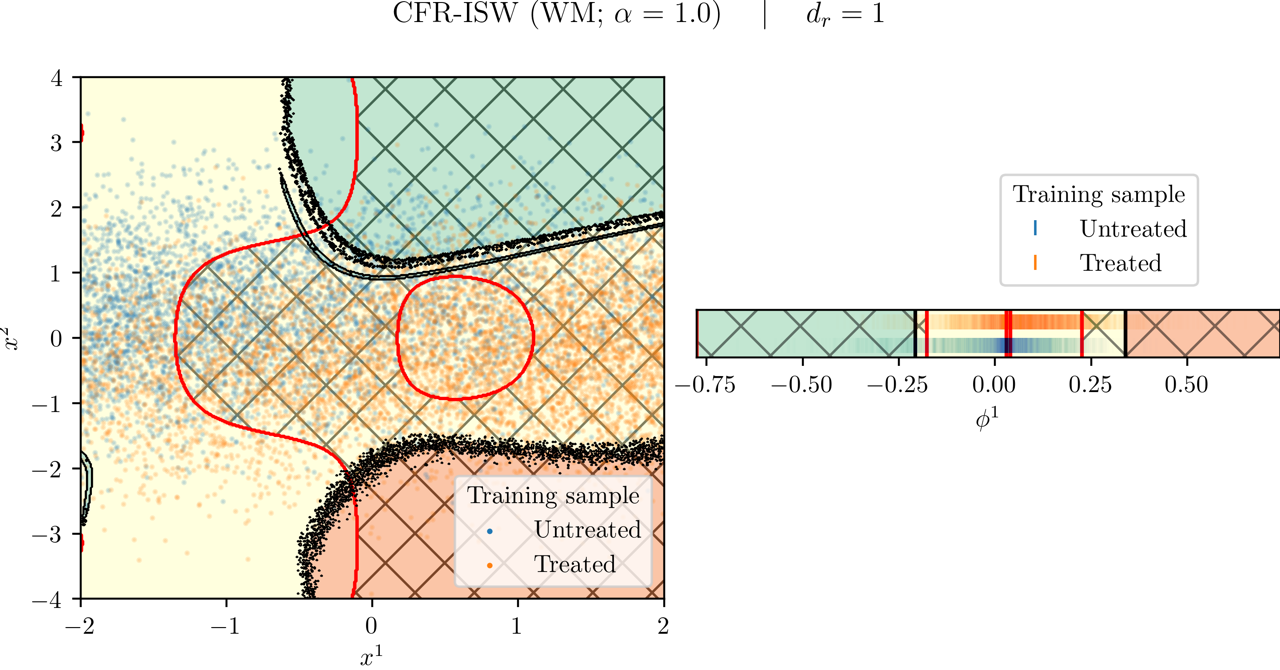} 
    \vrule \hspace{0.1cm} 
    \includegraphics[width=0.6\textwidth]{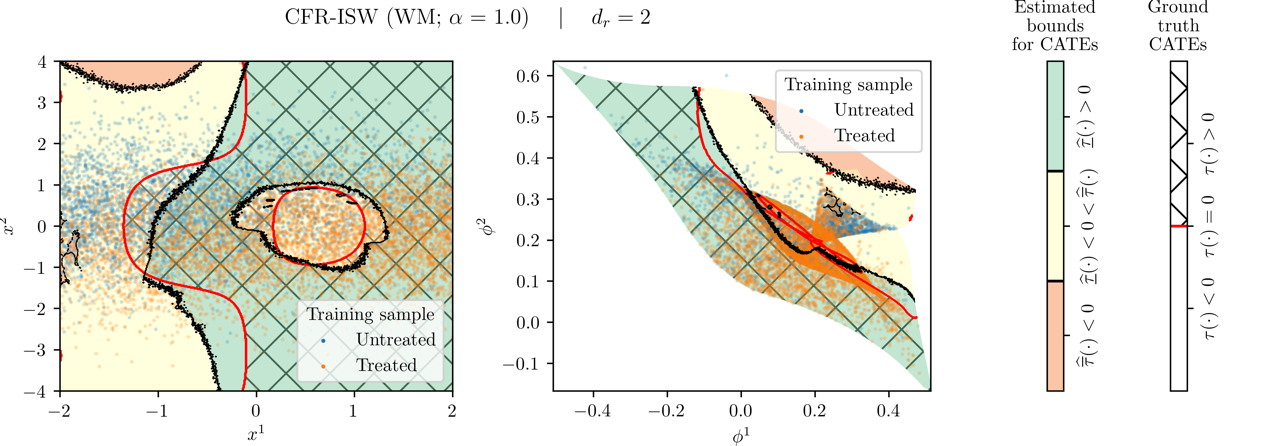} 
    \hspace{-2.1cm} \vrule
    \vspace{0.1cm} \hrule \vspace{0.1cm}
    \hspace{-0.2cm} \includegraphics[width=0.39\textwidth]{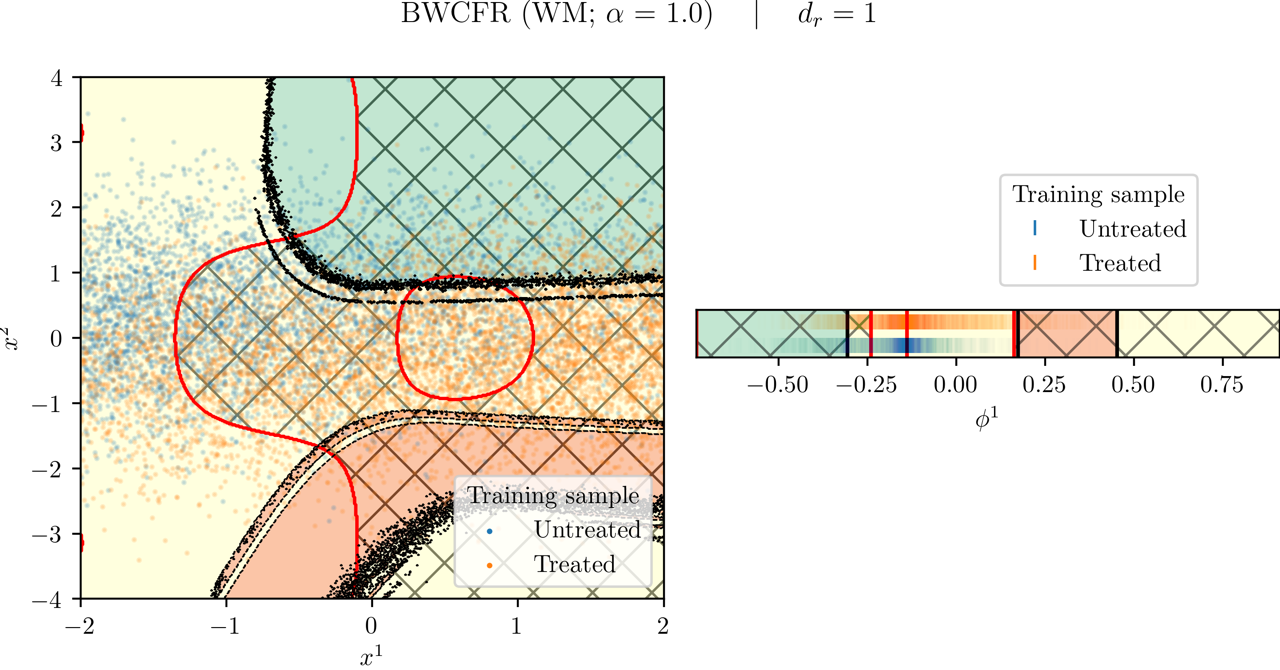} 
    \vrule \hspace{0.1cm} 
    \includegraphics[width=0.6\textwidth]{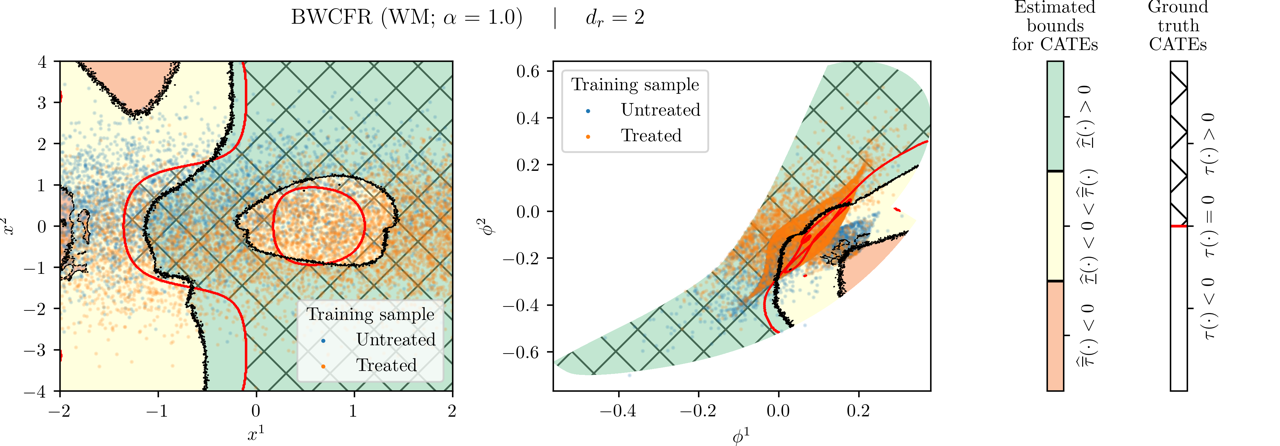} 
    \hspace{-2.1cm} \vrule
    \vspace{-0.2cm}
    \caption{{Decision boundaries for the synthetic dataset. Ground-truth and estimated boundaries are shown with hatches and colors, respectively. Training samples ($n_{\text{train}} = 10,000$) are shown too to highlight treatment imbalance. Here, $\delta = 0.01$.}}
    \label{fig:synth-des-boundaries-app}
    \vspace{-0.25cm}
\end{figure}

\end{document}